\pdfoutput=1

\typeout{KR2021 Instructions for Authors}


\documentclass{article}
\pdfpagewidth=8.5in
\pdfpageheight=11in

\usepackage{kr}

\usepackage{times}
\usepackage{soul}
\usepackage{url}
\usepackage[hidelinks]{hyperref}
\usepackage[utf8]{inputenc}
\usepackage[small]{caption}
\usepackage{graphicx}
\usepackage{amsmath}
\usepackage{amssymb}
\usepackage{amsthm}
\usepackage{booktabs}
\usepackage{algorithm}
\usepackage{algorithmic}
\usepackage{wasysym}
\usepackage[title]{appendix}
\urlstyle{same}


\newtheorem{definition}{Definition}
\newtheorem{theorem}[definition]{Theorem}



\pdfinfo{
/TemplateVersion (KR.2021.0)
}
\title{Expressing and Exploiting the Common Subgoal Structure \\
of Classical Planning Domains Using Sketches}
\author{
Dominik Drexler$^1$\and
Jendrik Seipp$^1$\and
Hector Geffner$^{3,2,1}$
\affiliations
$^1$Link{\"o}ping University, Link\"oping, Sweden\\
$^2$Universitat Pompeu Fabra, Barcelona, Spain\\
$^3$Instituci\'{o} Catalana de Recerca i Estudis Avan\c{c}ats (ICREA), Barcelona, Spain\\
\emails
\{dominik.drexler, jendrik.seipp\}@liu.se, hector.geffner@upf.edu
}


\newcommand{\remarktodo}[1]{\todo[color=yellow!30]{\upshape #1}}
\renewcommand{\remarktodo}[1]{}

\newcommand{\intextcite}[1]{\citeauthor{#1} (\citeyear{#1})}

\newcommand{\tup}[1]{(#1)}

\newcommand{\iw}[1]{\ensuremath{\text{IW}(#1)}\xspace}

\newcommand{\siw}{\ensuremath{\text{SIW}}\xspace}
\newcommand{\siwR}{\ensuremath{\text{SIW}_{\text R}}\xspace}
\newcommand{\siwRPhi}{\ensuremath{\text{SIW}_{\text R_\Phi}}\xspace}
\newcommand{\lama}{\ensuremath{\text{LAMA}}\xspace}
\newcommand{\bfws}{\ensuremath{\text{Dual-BFWS}}\xspace}

\newcommand{\Q}{\mathcal{Q}}

\newcommand{\sketch}{\ensuremath{R_\Phi}}




\newcommand{\pplus}{\hspace{-.05em}\raisebox{.15ex}{\footnotesize$\uparrow$}}
\newcommand{\mminus}{\hspace{-.05em}\raisebox{.15ex}{\footnotesize$\downarrow$}}


\newcommand{\EQ}[1]{#1{\,=\,}0}
\newcommand{\GT}[1]{#1{\,>\,}0}

\newcommand{\DEC}[1]{#1\mminus}
\newcommand{\INC}[1]{#1\pplus}
\newcommand{\UNK}[1]{#1?}

\newcommand{\prule}[2]{\{ #1 \} \mapsto \{ #2 \}}

\newcommand{\Omit}[1]{}


\newcommand{\domain}{\ensuremath{D}}
\newcommand{\instance}{\ensuremath{I}}
\newcommand{\goal}{\ensuremath{\mathit{Goal}}}
\newcommand{\initial}{\ensuremath{\mathit{Init}}}
\newcommand{\states}{\ensuremath{S}}
\newcommand{\initialstate}{\ensuremath{s_0}}
\newcommand{\goalstates}{\ensuremath{G}}
\newcommand{\actions}{\ensuremath{\mathit{Act}}}
\newcommand{\successor}{\ensuremath{f}}
\newcommand{\applicability}{\ensuremath{A}}


\newcommand{\bclevelone}{\ensuremath{\mathit{l}1}}
\newcommand{\bcleveltwo}{\ensuremath{\mathit{l}2}}
\newcommand{\bshand}{\ensuremath{\mathit{h}}}
\newcommand{\bsshot}{\ensuremath{\mathit{g}}}
\newcommand{\bsshaker}{\ensuremath{\mathit{t}}}
\newcommand{\bsingredient}{\ensuremath{\mathit{i}}}
\newcommand{\bsbeverage}{\ensuremath{\mathit{b}}}
\newcommand{\bscocktail}{\ensuremath{\mathit{c}}}

\newcommand{\bpontable}[1]{\ensuremath{\mathit{ontable}(#1)}}
\newcommand{\bpholding}[2]{\ensuremath{\mathit{holding}}(#1, #2)}
\newcommand{\bpempty}[1]{\ensuremath{\mathit{empty}(#1)}}
\newcommand{\bpcontains}[2]{\ensuremath{\mathit{contains}(#1, #2)}}
\newcommand{\bpclean}[1]{\ensuremath{\mathit{clean}(#1)}}
\newcommand{\bpused}[2]{\ensuremath{\mathit{used}(#1, #2)}}
\newcommand{\bpshakerlevel}[2]{\ensuremath{\mathit{shaker}\text{-}\mathit{level}(#1, #2)}}

\newcommand{\bfunserved}{\ensuremath{\mathit{g}}}
\newcommand{\bfused}{\ensuremath{\mathit{u}}}
\newcommand{\bfconsistentone}{\ensuremath{\mathit{c}_1}}
\newcommand{\bfconsistenttwo}{\ensuremath{\mathit{c}_2}}

\newcommand{\cckitchen}{\ensuremath{\mathit{kitchen}}}
\newcommand{\cssandwich}{\ensuremath{\mathit{s}}}
\newcommand{\cstray}{\ensuremath{\mathit{p}}}
\newcommand{\cstable}{\ensuremath{\mathit{t}}}
\newcommand{\cschild}{\ensuremath{\mathit{c}}}
\newcommand{\cpatkitchensandwich}[1]{\ensuremath{\mathit{at}\text{-}\mathit{kitchen}\text{-}\mathit{sandwich}(#1)}}
\newcommand{\cpnotexists}[1]{\ensuremath{\mathit{notexists}(#1)}}
\newcommand{\cpontray}[2]{\ensuremath{\mathit{ontray}(#1, #2)}}
\newcommand{\cpat}[2]{\ensuremath{\mathit{at}(#1, #2)}}
\newcommand{\cpserved}[1]{\ensuremath{\mathit{served}(#1)}}
\newcommand{\cpnoglutensandwich}[1]{\ensuremath{\mathit{no}\text{-}\mathit{gluten}\text{-}\mathit{sandwich}(#1)}}

\newcommand{\cfallergicchild}{\ensuremath{\mathit{c_g}}}
\newcommand{\cfregularchild}{\ensuremath{\mathit{c_r}}}
\newcommand{\cfglutenfreesandwichkitchen}{\ensuremath{\mathit{s_g^k}}}
\newcommand{\cfsandwichkitchen}{\ensuremath{\mathit{s^k}}}
\newcommand{\cfglutenfreesandwichtray}{\ensuremath{\mathit{s_g^t}}}
\newcommand{\cfsandwichtray}{\ensuremath{\mathit{s^t}}}

\newcommand{\dspackage}{\ensuremath{\mathit{p}}}
\newcommand{\dstruck}{\ensuremath{\mathit{t}}}
\newcommand{\dsdriver}{\ensuremath{\mathit{d}}}
\newcommand{\dslocation}{\ensuremath{\mathit{c}}}  
\newcommand{\dpin}[2]{\ensuremath{\mathit{in}(#1, #2)}}
\newcommand{\dpat}[2]{\ensuremath{\mathit{at}(#1, #2)}}
\newcommand{\dpdriving}[2]{\ensuremath{\mathit{driving}(#1, #2)}}
\newcommand{\dfpackages}{\ensuremath{\mathit{p}}}
\newcommand{\dftrucks}{\ensuremath{\mathit{t}}}
\newcommand{\dfdrivergoal}{\ensuremath{\mathit{d}_g}}
\newcommand{\dfdrivertruck}{\ensuremath{\mathit{d}_t}}
\newcommand{\dfboarded}{\ensuremath{\mathit{b}}}
\newcommand{\dfloaded}{\ensuremath{\mathit{l}}}

\newcommand{\fcblack}{\ensuremath{\mathit{black}}}
\newcommand{\fcwhite}{\ensuremath{\mathit{white}}}
\newcommand{\fstile}{\ensuremath{\mathit{t}}}
\newcommand{\fscolor}{\ensuremath{\mathit{c}}}
\newcommand{\fsrobot}{\ensuremath{\mathit{a}}}
\newcommand{\fppainted}[2]{\ensuremath{\mathit{painted}(#1, #2)}}
\newcommand{\fprobotat}[2]{\ensuremath{\mathit{robot}\text{-}\mathit{at}(#1, #2)}}
\newcommand{\fprobothas}[2]{\ensuremath{\mathit{robot}\text{-}\mathit{has}(#1, #2)}}
\newcommand{\fpclear}[1]{\ensuremath{\mathit{clear}(#1)}}
\newcommand{\fpup}[2]{\ensuremath{\mathit{up}(#1, #2)}}
\newcommand{\fpdown}[2]{\ensuremath{\mathit{down}(#1, #2)}}
\newcommand{\ffinvariant}{\ensuremath{v}}
\newcommand{\ffunpaintedtiles}{\ensuremath{g}}

\newcommand{\gscell}{\ensuremath{\mathit{c}}}
\newcommand{\gslock}{\ensuremath{\mathit{d}}}
\newcommand{\gskey}{\ensuremath{\mathit{e}}}
\newcommand{\gpatrobot}[1]{\ensuremath{\mathit{at}\text{-}\mathit{robot}(#1)}}
\newcommand{\gpholding}[1]{\ensuremath{\mathit{holding}(#1)}}
\newcommand{\gpopen}[1]{\ensuremath{\mathit{open}(#1)}}
\newcommand{\gpat}[2]{\ensuremath{\mathit{at}(#1, #2)}}
\newcommand{\gflocked}{\ensuremath{\mathit{l}}}
\newcommand{\gfkey}{\ensuremath{\mathit{k}}}
\newcommand{\gfopen}{\ensuremath{\mathit{o}}}
\newcommand{\gftarget}{\ensuremath{\mathit{t}}}


\newcommand{\schot}{\ensuremath{\mathit{hot}}}

\newcommand{\sstemperature}{\ensuremath{\mathit{t}}}
\newcommand{\ssobject}{\ensuremath{\mathit{a}}}
\newcommand{\spshape}[2]{\ensuremath{\mathit{shape}(#1, #2)}}
\newcommand{\spsurfacecondition}[2]{\ensuremath{\mathit{surface}\text{-}\mathit{condition}(#1, #2)}}
\newcommand{\sppainted}[2]{\ensuremath{\mathit{painted}(#1, #2)}}
\newcommand{\spscheduled}[1]{\ensuremath{\mathit{scheduled}}(#1)}
\newcommand{\spnotscheduled}[1]{\ensuremath{\neg\mathit{scheduled}}(#1)}
\newcommand{\sptemperature}[2]{\ensuremath{\mathit{temperature}}(#1, #2)}
\newcommand{\sfshape}{\ensuremath{\mathit{p}_1}}
\newcommand{\sfsurface}{\ensuremath{\mathit{p}_2}}
\newcommand{\sfcolor}{\ensuremath{\mathit{p}_3}}
\newcommand{\sfhot}{\ensuremath{\mathit{h}}}
\newcommand{\sfoccupied}{\ensuremath{\mathit{o}}}

\newcommand{\tsgood}{\ensuremath{\mathit{g}}}
\newcommand{\tsplace}{\ensuremath{\mathit{p}}}
\newcommand{\tstruck}{\ensuremath{\mathit{t}}}

\newcommand{\tpat}[2]{\ensuremath{\mathit{at}(#1, #2)}}
\newcommand{\tpreadytoload}[3]{\ensuremath{\mathit{ready}\text{-}\mathit{to}\text{-}\mathit{load}(#1, #2, #3)}}
\newcommand{\tploaded}[3]{\ensuremath{\mathit{loaded}(#1, #2, #3)}}
\newcommand{\tpstored}[2]{\ensuremath{\mathit{stored}(#1, #2)}}
\newcommand{\tponsale}[3]{\ensuremath{\mathit{on}\text{-}\mathit{sale}(#1, #2, #3)}}
\newcommand{\tfloaded}{\ensuremath{\mathit{u}}}
\newcommand{\tfstored}{\ensuremath{\mathit{w}}}

\newcommand{\emptyfeature}[1]{\ensuremath{\mathit{Empty}(#1)}}
\newcommand{\countfeature}[1]{\ensuremath{\mathit{Count}(#1)}}
\newcommand{\conceptdistancefeature}[3]{\ensuremath{\mathit{ConceptDist}(#1, #2, #3)}}

\newcommand{\roledistancefeature}[3]{\ensuremath{\mathit{RoleDist}(#1, #2, #3)}}
\newcommand{\sumroledistancefeature}[3]{\ensuremath{\mathit{SumRoleDist}(#1, #2, #3)}}

\usepackage{booktabs}
\usepackage{xspace}
\usepackage{todonotes}
\presetkeys{todonotes}{inline}{}

\begin{document}

\maketitle
\begin{abstract}
Width-based planning methods deal with conjunctive goals by decomposing problems into
subproblems of low width. Algorithms like \siw thus fail when the goal is not easily
serializable in this way or when some of the subproblems have a high width.
In this work, we address these limitations by using a simple but powerful
language for expressing finer problem decompositions introduced recently by Bonet and Geffner,
called \emph{policy sketches}. A policy sketch $R$ over a set of Boolean and numerical
features is a set of sketch rules $C \mapsto E$ that express how the values of these features
are supposed to change. Like general policies, policy sketches are domain general,
but unlike policies, the changes captured by sketch rules do not need to
be achieved in a single step. We show that many planning domains that cannot be solved by \siw
are provably solvable in low polynomial time with the \siwR algorithm,
the version of SIW that employs user-provided policy sketches. Policy
sketches are thus shown to be a powerful language for expressing domain-specific
knowledge in a simple and compact way and a convenient alternative
to languages such as HTNs or temporal logics. Furthermore, they make it easy
to express general problem decompositions and prove key properties of them
like their width and complexity.
\end{abstract}

\section{Introduction}

The success of width-based methods in classical planning is the result of two main ideas:
the use of conjunctive goals for decomposing a problem into subproblems,
and the observation that the width of the subproblems is often bounded and small \cite{lipovetzky-geffner-ecai2012}.
When these assumptions do not hold, pure width-based methods struggle and
need to be extended with heuristic estimators or landmark counters that yield finer problem decompositions
\cite{lipovetzky-et-al-aaai2017,lipovetzky-geffner-icaps2017}. These hybrid approaches have resulted in state-of-the-art planners
but run into shortcomings of their own: unlike pure width-based search methods,
they require declarative, PDDL-like action models and thus cannot plan with black box simulators
\cite{lipovetzky-et-al-ijcai2015,shleyfman-et-al-ijcai2016,geffner-geffner-aiide2015}, and they produce decompositions that are ad-hoc
and difficult to understand. Variations of these approaches, where the use of declarative action models
is replaced by  polynomial searches, have pushed the scope of pure-width based search methods
\cite{frances-et-al-ijcai2017}, but they do not fully overcome  their basic limits:
\emph{top goals that are not easily serializable or that have a high width.}
These are indeed the limitations of one of the simplest width-based search methods,
Serialized Iterated Width (SIW) that greedily achieves top goal first,
one at a time, using IW searches \cite{lipovetzky-geffner-ecai2012}.

In this work, we address the limitations of the SIW algorithm differently by using
a simple but powerful language for expressing richer problem decompositions
recently introduced by Bonet and Geffner \shortcite{bonet-geffner-aaai2021}, called \textbf{policy sketches}.
A policy sketch is a set of sketch rules over a set of Boolean and numerical features
of the form $C \mapsto E$ that express
how the values of the features are supposed to change.
Like \textbf{general policies} \cite{bonet-geffner-ijcai2018},
sketches are general and not tailored to specific instances
of a domain, but unlike policies, the feature changes expressed by sketch rules
represent {subgoals} that do not need to be achieved in a single step.

We pick up here where Bonet and Geffner left off and show
that many benchmark planning domains that SIW cannot solve
are provably solvable in low polynomial time through the \siwR algorithm,
the version of SIW that makes use of user-provided policy sketches.
Policy sketches are thus shown to be a powerful \textbf{language for expressing domain-specific
knowledge} in a simple and compact way and a convenient alternative to languages such as HTNs or temporal logics.
Bonet and Geffner introduce the language of sketches and the theory behind them; we show their use and the properties that follow from them.
As we will see, unlike HTNs and temporal logics, sketches can be used to \textbf{express and exploit the common subgoal structure
of planning domains} without having to express how subgoals are to be achieved.
Also, by being simple and succinct they provide a convenient target language
for learning the subgoal structure of domains automatically,
although this problem, related to the problem of learning general policies \cite{bonet-et-al-aaai2019,frances-et-al-aaai2021},
is outside the scope of this paper.
In this work, we use sketches to solve domains in polynomial time, which excludes intractable domains.
Indeed, intractable domains do not have general policies nor sketches of bounded width
and require non-polynomial searches.
Sketches and general policies, however, are closely related: sketches provide
the skeleton of a general policy, or a general policy with ``holes'' that
are filled by searches that can be shown to be polynomial \cite{bonet-geffner-aaai2021}.

The paper is organized as follows.
We review the notions of width, sketch width, and policy sketches
following Bonet and Geffner \shortcite{bonet-geffner-aaai2021}.
We show then that it is possible to write compact and transparent policy
sketches for many domains, establish their widths, and analyze the performance
of the \siwR algorithm. We compare sketches to HTNs and temporal logics,
briefly discuss the challenge of learning sketches automatically,
and summarize the main contributions.

\section{Planning and Width}

A \emph{classical planning problem} or instance  $P=\tup{\domain,\instance}$
is assumed to be given by a first-order domain $\domain$ with action schemas defined over some
domain predicates, and instance information $\instance$ describing a set of objects,
and two sets of ground literals describing the initial situation $\initial$ and goal description $\goal$.
The initial situation is assumed to be complete such
that either $L$ or its complement is in $\initial$.
A problem $P$ defines a state model $S(P)=\tup{\states,\initialstate,\goalstates,\actions,\applicability,\successor}$ where the states in $S$
are the truth valuations over the ground atoms represented by the set of  literals that they make true, the initial state $s_0$ is $\initial$,
the set of goal states $\goalstates$ includes all of those that make the goal atoms in $\goal$ true, the actions $Act$ are the ground actions obtained from
the action schemas and the objects, the actions $\applicability(s)$ applicable in state $s$ are those whose preconditions are true in $s$,
and the state transition function $f$ maps a state $s$ and an action $a \in \applicability(s)$ into the successor state $s'=f(a,s)$.
A \emph{plan} $\pi$ for $P$ is a sequence of actions $a_0,\ldots,a_n$ that is executable in $s_0$ and maps the initial state $s_0$
into a goal state; i.e., $a_i \in \applicability(s_i)$, $s_{i+1}=f(a_i,s_i)$, and $s_{n+1}\in\goalstates$.
The cost of a plan is assumed to be given by its length, and a plan is optimal if there is no shorter plan.
We'll be interested in solving \emph{collections} of well-formed instances $P=\tup{\domain,\instance}$ over fixed domains $\domain$
denoted as $\Q_\domain$ or simply as $\Q$.

The most basic width-based search method for solving a planning problem $P$  is \iw{1}.
It performs a standard breadth-first search in the rooted directed graph associated with the state model
$S(P)$ with one modification: \iw{1} prunes a newly generated state if it does not make
an atom $r$ true for the first time in the search. The procedure \iw{k} for $k > 1$ is like \iw{1}
but prunes a state if a newly generated state does not make a collection of up to $k$ atoms true for the first time.
Underlying the IW algorithms is the notion of \emph{problem width} \cite{lipovetzky-geffner-ecai2012}:
\begin{definition}[Width]
  \label{def:width}
  Let $P$ be a classical planning problem with initial state $\initialstate$ and goal states $\goalstates$.
  The \textbf{width} $w(P)$ of $P$ is the minimum $k$ for which there exists a
  \textbf{sequence $t_0,t_1,\ldots,t_m$ of atom tuples $t_i$},
  each consisting of at most $k$ atoms, such that:
  \begin{enumerate}
    \item $t_0$ is true in initial state $\initialstate$ of $P$,
    \item any optimal plan for $t_i$ can be extended into an optimal plan for $t_{i+1}$ by adding a single action, $i=1,\ldots,n-1$,
    \item if $\pi$ is an optimal plan for $t_m$, then $\pi$ is an optimal plan for $P$.
  \end{enumerate}
\end{definition}

If a problem $P$ is unsolvable, $w(P)$ is set to the number of variables
in $P$, and if $P$ is solvable in at most one step, $w(P)$ is set to $0$
\cite{bonet-geffner-aaai2021}. Chains of tuples $\theta = (t_0, t_1, \dots, t_m)$
that comply with conditions 1--3 are called \textbf{admissible}, and the
size of $\theta$ is the size $|t_i|$ of the largest tuple in the chain.
We talk about the third condition by saying that $t_m$ implies $G$ in the admissible chain $t_1,t_2,\ldots,t_m$.
The width $w(P)$ is thus $k$ if $k$ is the minimum size of an admissible chain for $P$.
If the width of a problem $P$ is $w(P) = k$,
\iw{k} finds an optimal (shortest) plan for $P$ in time and space
that are exponential in $k$ and not in the number of problem variables $N$
as breadth-first search.

The \iw{k} algorithm expands up to $N^k$ nodes, generates up to $bN^k$ nodes, and runs in time and space
$O(bN^{2k-1})$ and $O(bN^k)$, respectively,  where $N$ is the number of atoms and $b$ bounds the branching factor in problem $P$.
\iw{k} is guaranteed to solve $P$ optimally (shortest path) if $w(P) \leq k$.
If the width of $P$ is not known, the \textbf{IW}  algorithm can be run instead which calls \iw{k} iteratively for
$k\!=\!0,1,\ldots,N$ until the problem is solved, or found to be unsolvable.

While IW and \iw{k} algorithms  are not practical by themselves, they are building
blocks for other methods. Serialized Iterated Width or \textbf{SIW} \cite{lipovetzky-geffner-ecai2012},
starts  at the initial  state $s=s_0$ of  $P$,  and then performs  an IW search
from $s$ to find  a shortest path to state $s'$ such that
$\#g(s') < \#g(s)$ where $\#g$ counts the number of top goals of $P$
that are not true in $s$. If $s'$ is not a goal state, $s$ is set to $s'$
and the loop repeats until a goal state is reached.

In practice, the \iw{k} searches in SIW are  limited to $k \leq 2$ or $k \leq 3$, so that SIW  solves
a problem or fails in low polynomial time. SIW performs  well in many benchmark domains but
fails in problems where the width of some  top goal is not small, or the top
goals can't be serialized greedily. More recent methods
address these limitations by using width-based notions (novelty measures) in complete best-first search algorithms
\cite{lipovetzky-et-al-aaai2017,frances-et-al-ijcai2017}, yet they also struggle in problems where some top goals have high width.
In this work, we take a different route: we keep the greedy polynomial searches
underlying SIW but consider a richer class of problem decompositions expressed through sketches.
The resulting planner \siwR is \textbf{not} domain-independent like SIW,
but it illustrates that a bit of domain knowledge can go a long way in the effective solution of arbitrary domain instances.

\section{Features and Sketches}

A \textbf{feature} is a function of the state over a class of problems $\Q$.
The features considered in the language of sketches are {Boolean},
taking values in the Boolean domain, or {numerical},
taking values in the non-negative integers. For a set $\Phi$ of features
and a state $s$ of some instance $P$ in $\Q$, $f(s)$ is the \textbf{feature valuation}
determined by a state $s$. A {Boolean}  feature valuation over $\Phi$
refers instead to the valuation of the expressions $p$ and $n=0$
for Boolean and numerical features $p$ and $n$ in $\Phi$. If $f$
is a feature valuation, $b(f)$  will denote the Boolean feature valuation
determined by $f$ where the values of numerical features are just compared with $0$.

The set of features $\Phi$  \textbf{distinguish} or \textbf{separate
the goals}  in $\Q$ if there is a set $B_\Q$ of Boolean feature valuations
such that $s$ is a goal state of an instance $P \in \Q$ iff $b(f(s)) \in B_\Q$.
For example, if $\Q_{clear}$ is the set of all blocks world instances with stack/unstack operators
and common goal $clear(x) \land handempty$ for some block $x$, and $\Phi=\{n(x),H\}$
are the features that track  the number of blocks above $x$ and whether
the gripper is holding a block, then there is a single Boolean goal valuation that
makes the expression $n(x)=0$ true and $H$ false.

A \textbf{sketch rule} over features $\Phi$ has the form $C \mapsto E$
where $C$ consists of Boolean feature conditions, and $E$ consists of feature effects.
A Boolean (feature) condition is of the form $p$ or $\neg p$ for a Boolean feature $p$ in $\Phi$,
or $n=0$ or $n>0$ for a numerical feature $n$ in $\Phi$.
A feature   effect is an expression of the form $p$, $\neg p$,
or $\UNK{p}$ for a Boolean feature $p$ in $\Phi$, and $\DEC{n}$, $\INC{n}$,
or $\UNK{n}$ for a numerical feature $n$ in $\Phi$.
The syntax of sketch rules is the syntax of the policy rules
used to define generalized policies \cite{bonet-geffner-ijcai2018},
but their semantics is different.
In policy rules, the effects have to be delivered in one step by state transitions,
while in sketch rules, they can be delivered by longer state sequences.

A \textbf{policy sketch}  $R_\Phi$ is a collection of sketch rules over the features $\Phi$
and the sketch is {well-formed} if it is built from features that \textbf{distinguish the goals} in $\Q$,
and is \textbf{terminating} (to be made precise below).  A \textbf{well-formed sketch} for a class of problems $\Q$
defines a {serialization} over $\Q$; namely, a ``preference'' ordering `$\prec$' over the feature valuations that is
irreflexive and transitive, and which is given by the smallest strict partial order that satisfies $f' \prec f$ if $f'$ is not a goal valuation and
the pair of feature valuations $(f,f')$ \textbf{satisfies a sketch rule}  $C \mapsto E$.
This happens when: 1)~$C$ is true in $f$, 2)~the Boolean effects $p$ ($\neg p$) in $E$ are true in $f'$,
3)~the numerical effects  are satisfied by the pair $(f,f')$; i.e.,
if  $\DEC{n}$  in $E$ (resp. $\INC{n}$), then value of $n$ in $f'$ is smaller than in $f$, i.e.,
$f'_n < f_n$  (resp. $f_n > f'_n$), and 4)~Features that do not occur in $E$ have the same value in $f$ and $f'$.
Effects $\UNK{p}$ and $\UNK{n}$ do not constraint the value of the features $p$ and $n$ in any way,
and by including them in $E$, we say that they can change in any way, as opposed to features
that are not mentioned in $E$ whose values in $f$ and $f'$ must be the same.

Following Bonet and Geffner, we do not use the serializations determined by sketches
but their associated problem \textbf{decompositions}.
The set of \textbf{subgoal states} $G_r(s)$ associated with a sketch rule $r: C \mapsto E$ in $R_\Phi$
and a state $s$ for a problem $P \in Q$, is empty if $C$ is not true in $f(s)$,
and else is given by the set of states $s'$ with feature valuations $f(s')$ such that
the pair $(f,f')$ for $f=f(s)$ and $f'=f(s')$ satisfies the sketch rule $r$.
Intuitively, when in a state $s$, the subgoal states $s'$ in $G_r(s)$
provide a stepping stone in the search for plans connecting $s$ to the goal of $P$.


\section{Serialized Iterated Width with Sketches}

The \textbf{\siwR}  algorithm is a variant of the SIW algorithm that uses a given
sketch $R=R_{\Phi}$ for solving problems $P \in \Q$.
\siwR starts at the state $s:=s_0$, where $s_0$ is the initial state of  $P$,
and then performs an IW search to find a state $s'$
that is closest from $s$ such that $s'$ is a goal state of $P$ or
a subgoal state in $G_r(s)$ for some sketch rule $r$ in $R$.
If $s'$ is not a goal state, then $s$ is set to $s'$, $s := s'$,
and the loop repeats until a goal state is reached.
The features  define subgoal states through the  sketch rules
but otherwise play no role in the IW searches.

The only difference between \siw and \siwR is that in \siw\
each IW search finishes  when the goal counter $\#g$ is decremented,
while in \siwR, when a  goal or subgoal state is reached.
The behavior of plain SIW can be emulated in \siwR using the
single sketch rule $\prule{\GT{\#g}}{\DEC{\#g}}$ in $R$ when the goal counter $\#g$ is the only
feature, and the rule $\prule{\GT{\#g}}{\DEC{\#g},\UNK{p},\UNK{n}}$, when $p$
and $n$ are the other features. This last rule says that it is always ``good'' to decrease
the goal counter independently of the effects on other features, or alternatively,
that decreasing the goal counter is  a subgoal from any state $s$ where $\#g(s)$ is positive.

The complexity of \siwR\ over a class of problems $\Q$
can be  bounded in terms of the \textbf{width of the sketch} $R_\Phi$,
which is given by the width of the possible subproblems that can be encountered
during the execution of \siwR when solving a problem  $P$ in $\Q$.
For this, let us define the set $S_{R}(P)$  of reachable states in $P$
when following the sketch $R=R_\Phi$ recursively as follows:
1)~the initial state $s$ of $P$ is in $S_{R}(P)$,
2)~the (subgoal) states $s'\in G_r(s)$ that are closest to $s$ are in $S_R(P)$ if $s\in S_R(P)$ and $r \in R$.
The states in $S_R(P)$ are called the $R$-reachable states in $P$.
The width of the sketch $R$ is then \cite{bonet-geffner-aaai2021}:

\begin{definition}[Sketch width]
    \label{def:sketch_width}
    The width of the sketch $R=R_\Phi$ at state $s$ of problem $P \in \Q$, denoted  $w_{R}(P[s])$,
    is the width $k$ of the subproblem $P[s]$ that is like $P$ but with initial state $s$
    and goal states that contain those of $P$ and those in $G_r(s)$ for all $r \in R$.
     The \textbf{width of the sketch} $R$ over $\Q$ is $w_{R}(\Q) = \max_{P,s} w_{R}(P[s])$
     for $P \in \Q$ and $s\in S_{R}(P)$.\footnote{This definition changes the one by Bonet and Geffner slightly
       by restricting the reachable states $s$ to those that are $R$-reachable; i.e., part of $S_R(P)$.
       This distinction is convenient when $\Q$ does not contain all possible ``legal'' instances $P$ but only
       those in which the initial situations complies with certain conditions (e.g., robot arm is empty).
       In those cases, the sketches for $\Q$ do not have to cover all reachable states.}
\end{definition}

The time complexity of \siwR can then be expressed as follows, under the assumption
that the features are all linear \cite{bonet-geffner-aaai2021}:

\begin{theorem}
    \label{theorem:siwr_complexity}
    If width $w_{R}(\Q)=k$, \siwRPhi solves any $P\in\Q$ in $O(bN^{|\Phi|+2k-1})$ time and $O(bN^k+N^{|\Phi|+k})$ space.
\end{theorem}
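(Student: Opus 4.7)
The plan is to decompose the overall cost of $\siwRPhi$ on a problem $P\in\Q$ into (i)~the number of outer iterations, i.e.~the number of IW searches that $\siwRPhi$ performs before reaching a goal state, and (ii)~the time and space cost of a single IW search. For (ii), I would invoke Definition~\ref{def:sketch_width}: if $w_R(\Q)=k$, then for every $R$-reachable state $s\in S_R(P)$ the subproblem $P[s]$ (same dynamics as $P$, initial state $s$, goal set $\goalstates\cup\bigcup_{r\in R} G_r(s)$) has width at most $k$. Each outer iteration is precisely an IW search on some such $P[s]$, so the iterative IW procedure terminates on the call \iw{k'} with $k'\le k$, whose cost dominates the sum and gives $O(bN^{2k-1})$ time and $O(bN^k)$ space per iteration, exactly as recalled in the width section.

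For (i), I would argue that the sequence of states $s_0,s_1,s_2,\ldots$ at which successive IW searches start induces a strictly decreasing chain in the serialization $\prec$ determined by $R_\Phi$. Indeed, whenever $s_{i+1}\in G_r(s_i)$ for some rule $r$, the definition of $G_r$ ensures that the pair $(f(s_i),f(s_{i+1}))$ satisfies $r$, which by the definition of the serialization gives $f(s_{i+1})\prec f(s_i)$. Because $\prec$ is a strict partial order, all valuations $f(s_i)$ along the sequence are pairwise distinct. Under the assumption that the features in $\Phi$ are linear, the numerical features take polynomially many values in $N$, so the total number of feature valuations over $\Phi$, and hence the length of the chain, is $O(N^{|\Phi|})$.

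Combining (i) and (ii), the total time is bounded by $O(N^{|\Phi|})\cdot O(bN^{2k-1}) = O(bN^{|\Phi|+2k-1})$. For space, a single IW search uses $O(bN^k)$ memory that can be reused between iterations; separately, the solution returned by $\siwRPhi$ is the concatenation of the $O(N^{|\Phi|})$ subplans, each of which has length at most $O(N^k)$ since an IW($k$) search expands at most $N^k$ nodes before reaching its target, so storing the accumulated plan costs $O(N^{|\Phi|+k})$. Together these give $O(bN^k+N^{|\Phi|+k})$.

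The step I expect to require the most care is bounding the number of outer iterations by $O(N^{|\Phi|})$. It hinges on two ingredients that must be stated precisely: first, that each iteration really does move to a strictly smaller feature valuation in $\prec$, which follows from the pair-semantics of the subgoal sets $G_r(s)$ together with the definition of the sketch-induced serialization; and second, the hypothesis that all features in $\Phi$ are linear, which is what turns the a~priori infinite range of numerical features into a polynomial bound on the number of feature valuations and thereby into a polynomial bound on the number of IW calls.
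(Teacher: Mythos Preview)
The paper does not prove this theorem; it is quoted as a result of Bonet and Geffner~(2021), so there is no in-paper proof to compare against. Your reconstruction is the natural one and matches the standard argument: bound each IW call by the width-$k$ complexity and bound the number of calls by the number of feature valuations.

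Two small points to tighten. First, your step~(i) silently assumes that the sketch is \emph{terminating} (i.e., well-formed): the relation $\prec$ is only guaranteed to be a strict partial order under that hypothesis, and without it the chain $f(s_0)\succ f(s_1)\succ\cdots$ could revisit a valuation. The theorem is stated in the paper in a context where well-formedness is the standing assumption, so this is fine, but you should make the dependence explicit. Second, your justification of the $N^{|\Phi|+k}$ space term via the accumulated plan is correct in spirit; note that the length bound $O(N^k)$ per subplan follows because the subproblem has width $\le k$ and hence an optimal plan of length at most $N^k$ (the size of an admissible chain of $k$-tuples), not merely because \iw{k} expands at most $N^k$ nodes.
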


A feature is linear if it can be computed in linear time and can take a linear number of values at most.
In both cases, the linearity is in the number of atoms $N$ in the problem $P$ in $\Q$.
If the features are not linear but polynomial in $P$, the bounds on \siwR\ remain polynomial as well
(both $k$ and $\Phi$ are constants).

Bonet and Geffner introduce and study the language of sketches as a variation
of the language of general policies and their relation to the width and serialized width of planning domains.
They illustrate the use of sketches in a simple example (Delivery) but focus mainly on the theoretical aspects.
Here we focus instead on their use for modeling domain-specific knowledge in the standard planning benchmarks
as an alternative to languages like HTNs.


\section{Sketches for Classical Planning Domains}

In this section, we present policy sketches for a representative set of
classical planning domains from the benchmark set of the International
Planning Competition (IPC). All of the chosen domains are solvable
suboptimally in polynomial time, but plain \siw fails to solve them. There
are two main reasons why \siw fails. First, if achieving a single goal
atom already has a sufficiently large width. Last, greedy goal serialization
generates such avoidable subproblems, including reaching unsolvable states.

We provide a handcrafted sketch for each of the domains
and show that it is well-formed and has small sketch width.
These sketches allow \siwR to solve all instances of the domain in low polynomial time and space by
Theorem~\ref{theorem:siwr_complexity}. Furthermore, we impose a low
polynomial complexity bound on each feature, i.e., at most quadratic in
the number of grounded atoms. Such a limitation is necessary since
otherwise, we could use a numerical feature that encodes the optimal value function $V^*(s)$, i.e.,
the perfect goal distance of all states $s$. With such a feature,
the sketch rule $\prule{V^* > 0}{\DEC{V^*}}$ makes all problems trivially solvable.
Even with linear and quadratic features, we
can capture complex state properties such as distances between objects.

\subsection{Proving Termination and Sketch Width}

For each sketch introduced below we show that it uses goal-separating
features, is terminating and has bounded and small sketch width. Showing
that the features are goal separating is usually direct.



Proving \textbf{termination} is required to ensure that by iteratively
moving from a state $s$ to a subgoal state $s' \in G_r(s)$ we cannot get
trapped in a cycle. The conditions under which a sketch $R_\Phi$ is
terminating are similar to those that ensure that a general policy
$\pi_\Phi$ is terminating \cite{srivastava-et-al-aaai2011,bonet-geffner-jair2020,bonet-geffner-aaai2021},
and can be determined in polynomial time in the size of the sketch graph
$G(\pi_\Phi)$ using the \textsc{Sieve} procedure
\cite{srivastava-et-al-aaai2011,bonet-geffner-jair2020}.
Often, however, a simple syntactic procedure suffices
that eliminates sketch rules, one after the other until none is left.
This syntactic procedure is sound but not complete in general. In the
following, we say that a rule $C\mapsto E$ \emph{changes} a Boolean
feature $b$ if $b \in C$ and $\neg b \in E$ or the other way around. The
procedure iteratively applies one of the following cases until no rule is
left (the sketch terminates) or until no further cases apply (there may be
an infinite loop in the sketch): (a) eliminate a rule if it decreases a
numerical feature $n$ ($\DEC{n}$) that no other remaining rule can
increase ($\INC{n}$ or $\UNK{n}$); (b) eliminate a rule $r$ if it changes
the value of a Boolean feature that no other remaining rule changes in the
opposite direction; (c) mark all features that were used for eliminating a
rule in (a) or (b) as these can only change finitely often; (d) remove
rules $C\mapsto E$ that decrease a numerical feature $n$ or that change
a Boolean feature $b$ to true (false) such that for all other remaining
rules $C'\mapsto E'$ it holds that if $E'$ changes the feature in the
opposite direction, i.e., $\INC{n}$, $\UNK{n}$ or changes $b$ to false
(true), there must be a condition on a variable in $C$ that is marked and
is complementary to the one in $C'$, e.g., $\GT{n} \in C$ and $\EQ{n} \in
C'$ or $b \in C$ and $\neg b \in C'$.

Showing that a sketch for problem class $\Q$ has \textbf{sketch width} $k$
requires to prove that for all $R$-reachable states $s$ in all problem
instances $P\in\Q$, the width of $P[s]$ is bounded by $k$. Remember that
$P[s]$ is like $P$ but with initial state $s$, and goal states $G$ of $P$
combined with goal states $G_r(s)$ of all $r\in R$.
The definition of $R$-reachability shows that we need a recursive proof
strategy: informally, we show that (1) the feature conditions of a rule $r$ with nonempty subgoal $G_r(s)$ are true in all
initial states $s$ of $\Q$, and (2) by following a rule, we land in a goal
state or a state $s'$ where the feature conditions of another rule $r'$ with nonempty subgoal $G_{r'}(s')$ are true. To show that the sketch
has width $k$, we prove that all subtasks $P[s]$ for traversed states $s$
have width $k$. We do this by providing an admissible chain
$t_1,\ldots,t_m$ of size at most $k$ where all optimal plans for $t_m$ are
also optimal plans for $P[s]$.
We overapproximate the set of $R$-reachable states where necessary to make the proofs more compact.
This implies that our results provide an upper bound on the actual sketch width but are sufficiently small.
For space reasons, we give only two exemplary proofs in the paper and present the remaining proofs in \intextcite{drexler-et-al-arxiv2021}.

\subsection{Floortile}

In the Floortile domain \cite{linareslopez-et-al-aij2015}, a set of robots have to paint the tiles of a rectangular grid.
There can be at most one robot $\fsrobot$ on each tile $\fstile$ at any time
and the predicate $\fprobotat{\fsrobot}{\fstile}$ is true iff $\fsrobot$ is on tile $\fstile$.
If there is no robot on tile $\fstile$ then $\fstile$ is marked as clear, i.e., $\fpclear{\fstile}$ holds.
Robots can move left, right, up or down, if the target tile is clear.
Each robot $\fsrobot$ is equipped with a brush that is configured
to either paint in $\fcblack$ or $\fcwhite$, e.g., $\fprobothas{\fsrobot}{\fcblack}$
is true iff the brush of robot $\fsrobot$ is configured to paint in $\fcblack$.
It is possible to change the color infinitely often.
The goal is to paint a rectangular subset of the grid in chessboard style.
If a tile $\fstile$ has color $\fscolor$ then the predicate $\fppainted{\fstile}{\fscolor}$
holds and additionally the tile is marked as not clear, i.e., $\fpclear{\fstile}$ does not hold.
A robot $\fsrobot$ can only paint tile $\fstile$
if $\fsrobot$ is on a tile $\fstile'$ that is below or above $\fstile$, i.e.,
$\fprobotat{\fsrobot}{\fstile'}$ holds, and $\fpup{\fstile'}{\fstile}$ or $\fpdown{\fstile'}{\fstile}$ holds.

Consider the set of features $\Phi = \{\ffunpaintedtiles, \ffinvariant\}$
where $\ffunpaintedtiles$ counts the number of unpainted tiles that need
to be painted and $\ffinvariant$ represents that the following condition
is satisfied: for each tile $t_1$ that remains to be painted
there exists a sequence of tiles
$t_1,\ldots,t_n$ such that each $t_i$ with $i=1,\ldots,n-1$ remains to be painted,
$t_n$ does not need to be painted, and
for all pairs $t_{i-1}, t_i$ with $i=2,\ldots,n$ holds that $t_i$ is above
$t_{i-1}$, i.e., $\fpup{t_{i-1}}{t_i}$, or for all pairs $t_{i-1}, t_i$ with $i=2,\ldots,n$
it holds that $t_i$ is below $t_{i-1}$, i.e., $\fpdown{t_{i-1}}{t_i}$.
Intuitively, $\ffinvariant$ is true iff a given state is solvable.
The set of sketch rules $R_\Phi$ contains the single rule
\begin{align*}
    r = \prule{\ffinvariant, \GT{\ffunpaintedtiles}}{\DEC{\ffunpaintedtiles}}
\end{align*}
which says that painting a tile such that the invariant $\ffinvariant$ remains satisfied is good.

\begin{theorem}
    The sketch for the Floortile domain is well-formed and has width $2$.
\end{theorem}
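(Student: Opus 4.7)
The plan is to discharge the three obligations of being well-formed with width $k=2$: that $\Phi=\{\ffunpaintedtiles,\ffinvariant\}$ separates the goals, that $R_\Phi$ is terminating, and that every $R$-reachable subproblem $P[s]$ has width at most two.

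Goal-separation is immediate: a state is a goal iff every tile requiring paint is already painted in the right colour, i.e.\ iff $\EQ{\ffunpaintedtiles}$ (in which case $\ffinvariant$ holds vacuously), so the Boolean feature valuation $(\EQ{\ffunpaintedtiles},\ffinvariant)$ characterises the goal states across the whole class $\Q$. Termination follows from one application of case~(a) of the syntactic procedure: the single rule $r$ has effect $\DEC{\ffunpaintedtiles}$ and no rule increases or leaves $\ffunpaintedtiles$ unspecified, so $r$ is eliminated and no rules remain.

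For the width bound, fix an $R$-reachable state $s$ with $\GT{\ffunpaintedtiles}$ and $\ffinvariant$ true. The goals of $P[s]$ are those of $P$ together with $G_r(s)$, i.e.\ states where $\ffinvariant$ still holds and $\ffunpaintedtiles$ is strictly smaller. The combinatorial content of $\ffinvariant$ is that every tile still to be painted lies in a monotone (all ``up'' or all ``down'') chain of to-be-painted tiles ending at a tile that does \emph{not} need to be painted; hence the last to-be-painted tile $\fstile^*$ in such a chain, together with its target colour $\fscolor^*$, can be painted without breaking $\ffinvariant$. I will pick the triple $(\fsrobot^*,\fstile^*,\fscolor^*)$ minimising the joint cost of ``move $\fsrobot^*$ to a tile adjacent to $\fstile^*$, flip its brush colour if necessary, paint,'' so that any shortest plan producing $\fppainted{\fstile^*}{\fscolor^*}$ is also a shortest plan for $P[s]$.

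I then read off an admissible chain $t_0,\ldots,t_m$ of tuples of size at most $2$ from that optimal plan: singleton tuples $t_i=(\fprobotat{\fsrobot^*}{\fstile_i})$ track the robot along an optimal path $\fstile_0,\ldots,\fstile_\ell$ to a tile $\fstile_\ell$ adjacent to $\fstile^*$; a size-$2$ tuple $(\fprobotat{\fsrobot^*}{\fstile_\ell},\fprobothas{\fsrobot^*}{\fscolor^*})$ pins the colour change at its optimal location; and $t_m=(\fppainted{\fstile^*}{\fscolor^*})$ closes the chain, since by the minimality of our triple any optimal plan reaching $t_m$ is also optimal for $P[s]$. The main obstacle, as usual for width arguments, is verifying condition~(ii) of Definition~\ref{def:width} in the presence of ties (several equally short navigation paths, alternative robots, or alternative spots for the colour flip); the remedy is to fix one globally optimal triple first and extract the chain from a single optimal plan, after which each incremental extension by one action is routine. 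Size~$2$ (rather than $1$) is genuinely required only at the step just before the paint action, where the robot's position and its brush colour must be constrained simultaneously; thereafter the single atom $t_m$ already entails the subgoal of $P[s]$.
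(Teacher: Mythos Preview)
Your treatment of goal-separation and termination matches the paper and is fine. The width argument, however, has a real gap: your admissible chain only tracks the position of the chosen robot $\fsrobot^*$, and this breaks in configurations where the optimal plan for $P[s]$ must move \emph{other} robots before $\fsrobot^*$ can reach a tile adjacent to $\fstile^*$. Concretely, suppose $\fsrobot^*$ is already standing on $\fstile^*$ and every vertically adjacent tile from which it could paint is occupied by another robot (this is exactly the situation the paper isolates as its third case). Then the optimal plan for the singleton $(\fprobotat{\fsrobot^*}{\fstile_\ell})$ has length at least $2$ (clear $\fstile_\ell$, then step onto it), while your preceding tuple $(\fprobotat{\fsrobot^*}{\fstile_0})$ is already true in $s$ with optimal plan length $0$; condition~(2) of Definition~\ref{def:width} fails because no single action bridges the gap. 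Choosing a globally cheapest triple does not save you, since there are $R$-reachable states in which \emph{every} robot/tile pair that preserves $\ffinvariant$ requires such clearing moves.

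The paper repairs this by inserting, before $\fsrobot^*$ moves, a segment of size-$2$ tuples of the form $(\fprobotat{\fsrobot_i}{\fstile_i},\fprobothas{\fsrobot^*}{\fscolor^*})$ that track each blocking robot $\fsrobot_i$ stepping aside while pinning $\fsrobot^*$'s brush colour; only after the adjacent tile is cleared does it switch to tracking $\fsrobot^*$. This is also why the paper performs the colour change \emph{first} and carries $\fprobothas{\fsrobot^*}{\fscolor^*}$ as the second atom throughout, rather than deferring it to the penultimate tuple as you do: the second slot is needed to hold the chain together across the clearing phase, not merely to synchronise position and colour at the end. Your sketch of ``read off the chain from one optimal plan'' is the right instinct, but the tuples you read off must sometimes name atoms about robots other than $\fsrobot^*$.
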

\begin{proof}
    Recall that a sketch is well-formed if it uses goal-separating
    features and is terminating.
    The features $\Phi$ are goal separating
    because the feature valuation $\EQ{\ffunpaintedtiles}$ holds in state $s$ iff $s$ is a goal state.
    The sketch $\sketch$ is terminating because $r$ decreases the numerical
    feature $\ffunpaintedtiles$ and no other rule increases $\ffunpaintedtiles$.

    It remains to show that the sketch width is $2$.
    Consider a Floortile instance $P$ with states $S$.
    If the initial state $s$ is a solvable non-goal state, then the feature conditions of $r$ are true, and the subgoal $G_r(s)$ is nonempty.
    If we reach such a subgoal state, then either the feature conditions of $r$ remain true
    because the invariant remains satisfied or the overall goal was reached.
    Next, we show that $P[s]$ with subgoal $G_r(s)$ in $R$-reachable state $s$ has width $2$.
    Consider states $S_1\subseteq S$ where the feature conditions of rule $r$ are true, i.e.,
    solvable states where a tile $\fstile$ must be painted in a color $\fscolor$.
    We do a three-way case distinction over states $S_1$.

    First, consider states $S_1^1\subseteq S_1$
    where some robot $\fsrobot$ on tile $\fstile_1$ that is configured to color $\fscolor$,
    can move to tile $\fstile_n$ above or below $\fstile$ to paint it.
    The singleton tuple $\fppainted{t}{\fscolor}$ implies $G_r(s)$ in $s\in S_1^1$
    in the admissible chain that consists of moving $\fsrobot$ from $\fstile_1$ to $\fstile_n$,
    while decreasing the distance to $\fstile_n$ in each step, and painting $\fstile$, i.e.,
    \begin{align*}
        (\fprobotat{\fsrobot}{\fstile_1},\ldots,\fprobotat{\fsrobot}{\fstile_n}, \fppainted{\fstile}{\fscolor}).
    \end{align*}
    Second, consider states $S_1^2\subseteq S_1$
    where the robot $\fsrobot$ must reconfigure its color from $\fscolor'$ to $\fscolor$ before painting.
    The tuple $(\fprobotat{\fsrobot}{\fstile_n}, \fppainted{\fstile}{\fscolor})$
    implies $G_r(s)$ in $s\in S_1^2$ in the admissible chain that consists of reconfiguring the color,
    and then moving closer and painting as before, i.e.,
    \begin{align*}
        (&(\fprobotat{\fsrobot}{\fstile_1}, \fprobothas{\fsrobot}{\fscolor'}), \\
         &(\fprobotat{\fsrobot}{\fstile_1}, \fprobothas{\fsrobot}{\fscolor}), \ldots, \\
         &(\fprobotat{\fsrobot}{\fstile_n}, \fprobothas{\fsrobot}{\fscolor}), \\
         &(\fprobotat{\fsrobot}{\fstile_n}, \fppainted{\fsrobot}{\fscolor})).
    \end{align*}
    We observe that reconfiguring requires an admissible chain
    of size $2$ because of serializing the reconfiguring and the moving part.
    Therefore, in the following case, we assume that the robot must reconfigure its color.

    Third, consider states $S_1^3\subseteq S_1$
    where robot $\fsrobot$ is standing on $\fstile$
    and there is a sequence of robots $\fsrobot_1,\ldots,\fsrobot_n$
    such that $\fsrobot$ can only paint $\fstile$ if each $\fsrobot_1,\ldots,\fsrobot_n$
    moves in such a way that tile $\fstile'$ above or below $\fstile$ becomes clear.
    Using the fact that a rectangular portion inside a rectangular grid
    has to be painted, it follows that the set of tiles that must not be painted
    is pairwise connected.
    Therefore, we can move each robot $\fsrobot_i$ from its current tile
    $\fstile_i'$ to $\fstile_i$ in such a way that after moving each robot,
    tile $\fstile$ becomes clear.
    The tuple $(\fprobotat{\fsrobot}{\fstile'}, \fppainted{\fstile}{\fscolor})$
    implies $G_r(s)$ in $s\in S_1^3$ in the admissible chain that consists of moving each robot $\fsrobot_i$
    from $\fstile_i'$ to $\fstile_i$ in such a way
    that moving all of them clears tile $\fstile'$, followed by
    moving $\fsrobot$ to $\fstile'$, and painting $\fstile$, i.e.,
    \begin{align*}
        (&(\fprobotat{\fsrobot}{\fstile}, \fprobothas{\fsrobot}{\fscolor'}), \\
         &(\fprobotat{\fsrobot}{\fstile}, \fprobothas{\fsrobot}{\fscolor}), \\
         &(\fprobotat{\fsrobot_1}{\fstile_1}, \fprobothas{\fsrobot}{\fscolor}),\ldots, \\
         &(\fprobotat{\fsrobot_n}{\fstile_n}, \fprobothas{\fsrobot}{\fscolor}), \\
         &(\fprobotat{\fsrobot}{\fstile'}, \fprobothas{\fsrobot}{\fscolor}), \\
         &(\fprobotat{\fsrobot}{\fstile'}, \fppainted{\fstile}{\fscolor}))
    \end{align*}
    We obtain sketch width $2$ because all tuples
    in admissible chains have size of at most $2$.
\end{proof}

\subsection{TPP}

In the Traveling Purchaser Problem (TPP) domain, there is a set of places
that can either be markets or depots, a set of trucks, and a set of goods
\cite{gerevini-et-al-aij2009}. The places are connected via a roads, allowing
trucks to drive between them.
If a truck $\tstruck$ is at place $\tsplace$,
then atom $\tpat{\tstruck}{\tsplace}$ holds.
Each market $\tsplace$ sells specific quantities of goods, e.g.,
atom $\tponsale{\tsgood}{\tsplace}{2}$ represents
that market $\tsplace$ sells two quantities of good $\tsgood$.
If there is a truck $\tstruck$ available at market $\tsplace$,
it can buy a fraction of the available quantity of good $\tsgood$,
making $\tsgood$ available to be loaded into $\tstruck$,
while the quantity available at $\tsplace$ decreases accordingly,
i.e., atom $\tponsale{\tsgood}{\tsplace}{1}$ and
$\tpreadytoload{\tsgood}{\tsplace}{1}$ hold afterwards.
The trucks can unload the goods at any depot,
effectively increasing the number of stored goods, e.g.,
atom $\tpstored{\tsgood}{1}$ becomes false,
and $\tpstored{\tsgood}{2}$ becomes true,
indicating that two quantities of good $\tsgood$ are stored.
The goal is to store specific quantities of specific goods.

\siw fails in TPP because loading sufficiently many quantities of a
single good can require buying and loading them from different markets.
Making the goods available optimally requires taking the direct route to each market
followed by buying the quantity of goods. Thus,
the problem width is bounded by the number of quantities needed.

Consider the set of features
$\Phi = \{\tfloaded, \tfstored\}$ where
$\tfloaded$ is the number of goods not stored in any truck
of which some quantity remains to be stored, and
$\tfstored$ is the sum of quantities of goods that remain to be stored.
The sketch rules in $R_\Phi$ are defined as:
\begin{align*}
    r_1 &= \prule{\GT{\tfloaded}}{\DEC{\tfloaded}} \\
    r_2 &= \prule{\GT{\tfstored}}{\UNK{\tfloaded},\DEC{\tfstored}}
\end{align*}
Rule $r_1$ says that loading any quantity of a good
that remains to be stored is good.
Rule $r_2$ says that storing any quantity of a good
that remains to be stored is good.

\begin{theorem}
    The sketch for the TPP domain is well-formed and has width $1$.
\end{theorem}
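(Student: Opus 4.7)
The plan is to follow the same template as the Floortile proof: verify the features separate the goals, use the syntactic termination procedure, and then prove width~$1$ by case analysis on $R$-reachable non-goal states, constructing an admissible chain of singletons for each case. Goal separation is immediate: a state is a goal iff every requested quantity has been stored, which is iff $\tfstored=0$. For termination, the syntactic procedure first removes $r_2$ since it decreases $\tfstored$ and no other rule increases it; mark $\tfstored$; then $r_1$ remains alone and decreases $\tfloaded$, which no remaining rule changes, so $r_1$ is removed as well.

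For the width bound, fix an $R$-reachable non-goal state $s$ of some $P\in\Q$. In Case~1, $\tfloaded(s)>0$, so some good $\tsgood$ still needs to be stored yet has no loaded quantity in any truck; by solvability, some market $\tsplace$ still sells $\tsgood$. Choose any truck $\tstruck$ with current location $\tsplace_0$ and a shortest road path $\tsplace_0,\tsplace_1,\ldots,\tsplace_k=\tsplace$. Form the admissible chain of singletons
\begin{align*}
(\tpat{\tstruck}{\tsplace_0}),(\tpat{\tstruck}{\tsplace_1}),\ldots,(\tpat{\tstruck}{\tsplace_k}),(\tpreadytoload{\tsgood}{\tsplace}{q}),(\tploaded{\tsgood}{\tstruck}{q'}),
\end{align*}
where each successive drive, buy, and load action extends the optimal plan for one tuple by exactly one action into an optimal plan for the next, and the final singleton implies $G_{r_1}(s)$ because a load of $\tsgood$ strictly decreases $\tfloaded$. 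In Case~2, $\tfloaded(s)=0$ and $\tfstored(s)>0$, so some good $\tsgood$ needing storage already sits loaded in some truck $\tstruck$; picking any depot $\tsplace$ and a shortest road path from the current location of $\tstruck$ to $\tsplace$, the analogous chain ending with a singleton $(\tpstored{\tsgood}{q'})$ records the truck's progression and the unload action, and implies $G_{r_2}(s)$ since unloading strictly decreases $\tfstored$. Since every tuple in both chains has size one, $w_{R}(P[s])\le 1$.

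The main obstacle is not the chain construction itself but the need to verify that the rules' subgoals are genuinely attainable at every $R$-reachable state: in Case~1 we rely on a market still selling $\tsgood$ existing (true because the instance is solvable and $\tsgood$ still needs additional storage with no loaded units anywhere), and in Case~2 we rely on some truck actually holding $\tsgood$ (forced by $\tfloaded=0$ together with $\tfstored>0$). Once these existential facts are in place, width~$1$ is witnessed by the single-atom chains above, and together with well-formedness this yields the theorem.
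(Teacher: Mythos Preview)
Your proof follows essentially the same approach as the paper's: the same goal-separation argument via $\tfstored=0$, the same termination order ($r_2$ first, then $r_1$), and the same singleton-atom admissible chains (drive--buy--load for $r_1$, drive--unload for $r_2$).

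One small caveat: the paper establishes width~$1$ for each rule's subgoal separately and then appeals to whichever subgoal is \emph{closest} from $s$, whereas you case-split on the value of $\tfloaded$. In your Case~1 it can happen that $\tfloaded(s)>0$ yet some \emph{other} needed good is already loaded in a truck, so $G_{r_2}(s)$ is nonempty and may be strictly closer than your $r_1$-chain's final atom; in that situation your chain is admissible for $G_{r_1}(s)$ but not for $P[s]$ (whose goal is the union $G\cup G_{r_1}(s)\cup G_{r_2}(s)$). The fix is immediate---in such states use the Case~2 chain instead---and the paper's ``regardless of which rule defines the closest subgoal'' phrasing is exactly what covers this. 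Also, ``choose any truck'' should read as choosing the truck/good/market combination that minimizes total cost, for the same reason.
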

\begin{proof}
    The features are goal separating because
    $\EQ{\tfstored}$ holds in state $s$ iff $s$ is a goal state.
    We show that the sketch $\sketch$ is terminating by iteratively eliminating rules:
    $r_2$ decreases the numerical feature $\tfstored$ which no other rule increments,
    so we eliminate $r_2$ and mark $\tfstored$.
    Now only $r_1$ remains and we can eliminate it since it decreases $\tfloaded$,
    which is never incremented.

    It remains to show that the sketch width is $1$.
    Consider any TPP instance $P$.
    In the initial situation $s$, the feature conditions of at least one rule $r$ are true
    and the corresponding subgoal $G_r(s)$ is nonempty.
    Furthermore, whenever we use $r_1$ in some state $s$ to get to the next subgoal $G_{r_1}(s)$,
    we know that in this subgoal the feature conditions of $r_2$ must be true and its subgoal is nonempty,
    and it can be the case that the feature conditions of $r_1$ remain true and its subgoal is nonempty.
    At some point, the subgoal of $r_2$ is the overall goal of the problem.
    Next, regardless of which rule $r$ defines the closest subgoal $G_r(s)$ for an $R$-reachable state $s$,
    we show that $P[s]$ with subgoal $G_r(s)$ in $R$-reachable state $s$ has width $1$.

    We first consider rule $r_1$.
    Intuitively, we show that loading a good that is not yet loaded
    but of which some quantity remains to be stored in a depot has width at most $1$.
    Consider states $S_1\subseteq S$ where the feature conditions of $r_1$ are true,
    i.e., states where there is no truck that has a good $\tsgood$ loaded
    but of which some quantity remains to be stored in a depot.
    With $G_{r_1}(s)$ we denote the subgoal states of
    $r_1$ in $s\in S_1$, i.e., states where some quantity $q_l$ of
    $\tsgood$ is loaded into a truck $\tstruck$.
    The tuple $\tploaded{\tsgood}{\tstruck}{q_l}$ implies $G_{r_1}(s)$ in $s\in S_1$
    in the admissible chain that consists of moving $\tstruck$ from its
    current place $\tsplace_1$ to the closest market $\tsplace_n$ that has
    $\tsgood$ available, ordered descendingly by their distance to
    $\tsplace_n$, buying $q_b$ quantities of $\tsgood$, loading $q_l$
    quantities of $\tsgood$, i.e., $(\tpat{\tstruck}{\tsplace_1},
    \allowbreak \ldots, \allowbreak \tpat{\tstruck}{\tsplace_n},
    \allowbreak \tpreadytoload{\tsgood}{\tsplace_n}{q_b}, \allowbreak
    \tploaded{\tsgood}{\tstruck}{q_l})$. Note that loading $q_l$
    quantities can be achieved optimally by buying $q_b\geq q_l$ quantities.

    Next, we consider rule $r_2$.
    Intuitively, we show that storing a good of which some quantity remains to be stored in a depot has width at most $1$.
    Consider states $S_2\subseteq S$ where the feature conditions of $r_2$ are true
    and some quantity of a good is loaded that remains to be stored, i.e.,
    states where some quantity of a good $\tsgood$ remains to be stored in a depot,
    and some quantity $q_l$ of $\tsgood$ is loaded into a truck $\tstruck$ because it has width $1$ (see above).
    With $G_{r_2}(s)$ we denote the subgoal states of $r_2$ in $s\in S_2$, i.e.,
    states where the stored quantity $q_s$ of $\tsgood$ has increased from $q_s$ to $q_s'$
    using a fraction of the loaded quantity $q_l'\leq q_l$.
    The tuple $\tpstored{\tsgood}{q_s'}$ implies $G_{r_2}(s)$ in $s\in S_2$ in the admissible chain
    that consists of moving $\tstruck$ from its current place $\tsplace_1$
    to the closest depot at place $\tsplace_n$,
    ordered descendingly by their distance to $\tsplace_n$,
    storing $q_l'$ quantities of $\tsgood$, i.e.,
    $(\tpat{\tstruck}{\tsplace_1},\ldots,\tpat{\tstruck}{\tsplace_n}, \tpstored{\tsgood}{q_s'})$.

    We obtain sketch width $1$ because all tuples
    in admissible chains have a size of at most $1$.
\end{proof}

\subsection{Barman}

In the Barman domain \cite{linareslopez-et-al-aij2015},
there is a set of shakers, a set of shots, and a set of dispensers
where each dispenses a different ingredient.
There are recipes of cocktails each consisting of two ingredients, e.g.,
the recipe for cocktail $\bscocktail$ consists of ingredients $\bsingredient_1,\bsingredient_2$.
The goal is to serve beverages, i.e., ingredients and/or cocktails.
A beverage $\bsbeverage$ is served in shot $\bsshot$ if $\bsshot$ contains $\bsbeverage$.
An ingredient $\bsingredient$ can be filled into shot $\bsshot$
using one of the dispensers if $\bsshot$ is clean.
Producing a cocktail $\bscocktail$ with a shaker $\bsshaker$ requires
both ingredients $\bsingredient_1,\bsingredient_2$ of $\bscocktail$ to be in $\bsshaker$.
In such a situation, shaking $\bsshaker$ produces $\bscocktail$.
Pouring a cocktail from $\bsshaker$ into shot $\bsshot$ requires $\bsshot$ to be clean.
The barman robot has two hands which limits the number of shots and shakers it can hold at the same time.
Therefore, the barman often has to put down an object before it can grasp a different object.
For example, assume that the barman holds the shaker $\bsshaker$ and some shot $\bsshot'$
and assume that ingredient $\bsingredient$ must be filled into shot $\bsshot$.
Then the barman has to put down either $\bsshaker$ or $\bsshot'$ so that it can pick up $\bsshot$ with hand $\bshand$.
As in the Barman tasks from previous IPCs, we assume that there is only a single shaker
and that it is initially empty.

Consider the set of features $\Phi = \{\bfunserved, \bfused,
\bfconsistentone, \bfconsistenttwo \}$ where $\bfunserved$ is the number
of unserved beverages, $\bfused$ is the number of used shots, i.e.,
shots with a beverage different from the one mentioned in the goal,
$\bfconsistentone$ is true iff the first recipe ingredient of an unserved
cocktail is in the shaker, and $\bfconsistenttwo$ is true iff both recipe
ingredients of an unserved cocktail are in the shaker. We define the
following sketch rules for $R_\Phi$:
\begin{align*}
    r_1 &= \prule{\neg \bfconsistentone}{\UNK{\bfused}, \bfconsistentone}, \\
    r_2 &= \prule{\bfconsistentone,\neg \bfconsistenttwo}{\UNK{\bfused},\bfconsistenttwo}, \\
    r_3 &= \prule{\GT{\bfused}}{\DEC{\bfused}}, \\
    r_4 &= \prule{\GT{\bfunserved}}{\DEC{\bfunserved}, \UNK{\bfconsistentone}, \UNK{\bfconsistenttwo}}.
\end{align*}
Rule $r_1$ says that filling an ingredient into the shaker is good
if this ingredient is mentioned in the first part of the recipe of an unserved cocktail.
Rule $r_2$ says the same for the second ingredient, after the first ingredient has been added.
Requiring the ingredients to be filled into the shaker in a fixed order ensures
bounded width, even for arbitrary-sized recipes.
Rule $r_3$ says that cleaning shots is good and rule $r_4$ says that serving an ingredient or cocktail is good.

\begin{theorem}
    The sketch for the Barman domain is well-formed and has width $2$.
\end{theorem}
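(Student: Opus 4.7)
The plan is to establish well-formedness first and then bound the sketch width by exhibiting admissible chains of size at most $2$ at every $R$-reachable state. Goal separation is immediate: $\EQ{\bfunserved}$ holds in $s$ iff every target beverage is served, so a single Boolean valuation separates the goals of the Barman class.

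For termination I would run the syntactic procedure with a deliberate elimination order. The $\UNK{\bfconsistentone}$ and $\UNK{\bfconsistenttwo}$ effects of $r_4$ would block the eliminations of $r_1$ and $r_2$, so $r_4$ must be discharged first; this is sound because $r_4$ decreases $\bfunserved$ and no other rule increases $\bfunserved$, and $\bfunserved$ is then marked. After $r_4$ is gone, no remaining rule sets $\bfconsistentone$ back to false, so $r_1$ can be eliminated and $\bfconsistentone$ marked; likewise for $r_2$. Finally $r_3$ decreases $\bfused$ with no remaining rule incrementing it, leaving no rule.

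For the width bound I would case-split on an $R$-reachable state $s$ and exhibit a size-$2$ admissible chain for the subgoal of some applicable rule. If some unserved cocktail is missing an ingredient in the shaker, use the chain for $r_1$ or $r_2$: schedule free/grasp actions to bring the shaker into one hand and a clean shot into the other, fill the shot at the dispenser of the relevant ingredient, and pour the shot into the shaker, with tuples of the form $(\bpholding{\bshand}{\bsshaker}, \bpcontains{\bsshot}{\bsingredient})$ transitioning to $(\bpholding{\bshand}{\bsshaker}, \bpcontains{\bsshaker}{\bsingredient})$. If a used shot exists, use $r_3$: free a hand if needed, grasp the shot, and clean it, pairing $\bpholding{\bshand}{\bsshot}$ with the cleanliness atom. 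Otherwise $r_4$ applies: if the unserved beverage is an ingredient, fill a clean shot from the dispenser and serve; if it is a cocktail, $\bfconsistenttwo$ must already hold (else the previous case applies), so shake the shaker and pour into a clean goal shot.

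The main obstacle is justifying size-$2$ rather than size-$3$ tuples in the chains for $r_1, r_2, r_4$: the two-handed robot and the multiple intermediate objects (shaker, source shot, target shot) create configurations where a naive admissible chain would need to jointly track three atoms at once. The argument should exploit a carefully scheduled serialization in which at each step only one hand's status and one object-content atom need to be tracked, because the other hand's role can be discharged by a previous prefix of the chain. A secondary subtlety is overapproximating the $R$-reachable states: since the shaker is initially empty and only $r_1, r_2, r_4$ touch its contents, one may restrict to configurations in which the shaker contains at most the two compatible ingredients of the cocktail currently being built, so no auxiliary shaker-cleaning subroutine is needed that would blow up the width.
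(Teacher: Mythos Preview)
Your goal-separation and termination arguments match the paper's proof exactly, including the elimination order $r_4, r_1, r_2, r_3$.

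The width argument has two concrete problems. First, the $R$-reachability invariant you propose is false: after $r_4$ serves a cocktail by shaking and pouring, the closest subgoal state reached by \siwR leaves the shaker containing the \emph{shaken cocktail} (not ingredients of any ``current'' recipe) and marked used. So when $r_1$ next applies for a different cocktail, the shaker must be emptied and cleaned before a new ingredient can be poured in. The paper's admissible chain for $r_1$ therefore \emph{begins} with this cleaning prefix, tracking pairs such as $(\bpholding{\bshand}{\bsshaker}, \bpempty{\bsshaker})$ and $(\bpholding{\bshand}{\bsshaker}, \bpclean{\bsshaker})$; this is precisely where the size-$2$ bound is needed, not something that can be assumed away.

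Second, the plan you sketch for $r_1/r_2$, ``bring the shaker into one hand and a clean shot into the other, fill the shot at the dispenser'', is infeasible in Barman: the fill-shot action requires one hand to hold the shot and the other hand to be \emph{empty}. Hence a tuple like $(\bpholding{\bshand}{\bsshaker}, \bpcontains{\bsshot}{\bsingredient})$ cannot appear as the next link in an admissible chain from a state where the shaker is held. The paper instead puts the (clean) shaker on the table, picks a clean shot with one hand while the other is empty, fills, and pours into the tabled shaker; the implying tuple is $(\bpcontains{\bsshaker}{\bsingredient_1}, \bpshakerlevel{\bsshaker}{\bclevelone})$, pairing content with shaker level rather than with a holding atom. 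Once you replace your infeasible hand configuration with this table-based sequence and prepend the shaker-cleaning prefix, the argument goes through as in the paper.
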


\subsection{Grid}
In the Grid domain \cite{mcdermott-aimag2000}, a single robot operates in a grid-structured world.
There are keys and locks distributed over the grid cells.
The robot can move to a cell $\gscell$ above, below, left or right of its current cell
if $\gscell$ does not contain a closed lock or another robot.
The robot can drop, pick or exchange keys at its current cell
and can only hold a single key $\gskey$ at any time.
Keys and locks have different shapes
and the robot, holding a matching key, can open a lock
when standing on a neighboring cell.
The goal is to move keys to specific target locations that can be locked initially.
Initially, it is possible reach every lock for the unlock operation.
\siw fails in this domain when goals need to be undone, i.e.,
a key has to be picked up from its target location
to open a lock that is necessary for picking or moving a different key.

Consider the set of features
$\Phi = \{\gflocked, \gfkey, \gfopen, \gftarget \}$ where
$\gflocked$ is the number of locked grid cells,
$\gfkey$ is the number of misplaced keys,
$\gfopen$ is true iff the robot holds a key for which there is a closed lock, and
$\gftarget$ is true iff the robot holds a key that must be placed at some target grid cell.
We define the sketch rules for $R_\Phi$ as:
\begin{align*}
    r_1 &= \prule{\GT{\gflocked}}{\DEC{\gflocked},\UNK{\gfkey},\UNK{\gfopen},\UNK{\gftarget}} \\
    r_2 &= \prule{\EQ{\gflocked},\GT{\gfkey}}{\DEC{\gfkey},\UNK{\gfopen},\UNK{\gftarget}} \\
    r_3 &= \prule{\GT{\gflocked}, \neg \gfopen}{\gfopen, \UNK{\gftarget}} \\
    r_4 &= \prule{\EQ{\gflocked},\neg \gftarget}{\UNK{\gfopen}, \gftarget}
\end{align*}
Rule $r_1$ says that unlocking grid cells is good.
Rule $r_2$ says that placing a key at its target cell
is good after opening all locks.
Rule $r_3$ says that picking up a key that can be used
to open a locked grid cell is good if there are locked grid cells.
Rule $r_4$ says that picking up a misplaced key
is good after opening all locks.

\begin{theorem}
    The sketch for the Grid domain is well-formed and has width $1$.
\end{theorem}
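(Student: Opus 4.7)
The plan is to establish, in order, that the features are goal-separating, that the sketch is terminating, and that its width is at most $1$.

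For goal-separation, the key observation is that a state $s$ is a goal iff every designated goal key sits on its target cell, i.e., iff $\EQ{\gfkey}$; in such a state the robot does not hold a goal key, so $\gftarget$ is false, and likewise $\gfopen$ is determined. Thus the set $B_\Q$ consists of the Boolean valuations with $\EQ{\gfkey}$ and $\neg \gftarget$, which separates the goals. For termination I would apply the syntactic procedure step by step: first eliminate $r_1$ because $\DEC{\gflocked}$ is opposed by no other rule, marking $\gflocked$; then eliminate $r_2$ because $\DEC{\gfkey}$ is now opposed by no one, marking $\gfkey$; then invoke case (d) to eliminate $r_3$, using that its condition $\GT{\gflocked}$ is complementary, on the marked feature $\gflocked$, to the condition $\EQ{\gflocked}$ of $r_4$, the only other rule that could reopen $\gfopen$ via $\UNK{\gfopen}$; finally eliminate $r_4$ as the last rule standing.

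For the width bound I would do a case analysis on the Boolean valuation of $(\gflocked, \gfopen, \gftarget)$ at an $R$-reachable non-goal state $s$, exhibiting in each case an admissible chain of singleton tuples reaching a subgoal $G_r(s)$ of some applicable rule $r$. The template is always ``navigate along a shortest path, then perform one terminating action.'' If $\GT{\gflocked}$ and $\neg \gfopen$, rule $r_3$ applies: pick a closest matching key $e$ at cell $c_n$ and use the chain $\gpatrobot{c_0},\ldots,\gpatrobot{c_n},\gpholding{e}$ with the final atom achieved by a pickup or an exchange. If $\GT{\gflocked}$ and $\gfopen$, rule $r_1$ applies: the robot already holds a matching key for some closed lock $d$, so navigate to a cell $c_n$ adjacent to $d$ and terminate the chain with $\gpopen{d}$. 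If $\EQ{\gflocked}$ and $\neg \gftarget$, rule $r_4$ applies and a symmetric chain ending in $\gpholding{e}$ for a closest misplaced goal key $e$ works. Finally, if $\EQ{\gflocked}$ and $\gftarget$, rule $r_2$ applies: the robot holds a goal key $e$ with target cell $c_t$, and we terminate the navigation chain with $\gpat{e}{c_t}$ via a drop or an exchange.

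The main technical obstacle is admissibility condition (2): an optimal plan for $t_i$ must extend by a single action into an optimal plan for $t_{i+1}$. The dangerous motif is ``drop the currently held key, then navigate,'' because the extra drop action breaks the optimality of the navigation prefix. I plan to sidestep this by always using an exchange at the destination whenever the robot already holds a key, so every chain is exactly ``shortest path plus one terminal action,'' and optimality propagates step by step. A secondary point is ensuring that any $R$-reachable state actually contains a suitable key or lock reachable by a plain shortest path; this I would argue from the Grid invariant that every lock is initially reachable for the unlock operation, together with the observation that each rule's effect (opening a lock, moving a key to an empty cell or exchanging keys) preserves reachability of the locks that still need to be opened.
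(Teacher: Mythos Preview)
Your proposal is correct and follows essentially the same approach as the paper: goal-separation via $\EQ{\gfkey}$, termination by eliminating $r_1,r_2,r_3,r_4$ in that order through the syntactic procedure, and width~$1$ via a case analysis that produces ``shortest-path navigation plus one terminal action'' admissible chains of singleton atoms for each relevant rule. Your organization by the Boolean valuation of $(\gflocked,\gfopen,\gftarget)$ rather than rule-by-rule, and your explicit remark that exchange (rather than drop-then-pick) is what keeps condition~(2) of admissibility intact, are presentational refinements of the same argument; the paper handles the latter point implicitly by writing ``exchanging or picking'' in its chains.
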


\subsection{Childsnack}

In the Childsnack domain \cite{vallati-et-al-ker2018}, there is a set of
contents, a set of trays, a set of gluten-free breads, a set of regular
breads that contain gluten, a set of gluten-allergic children, a set of
children without gluten allergy, and a set of tables where the children
sit. The goal is to serve the gluten-allergic children with sandwiches
made of gluten-free bread and the non-allergic children with either type
of sandwich.

The Childsnack domain has large bounded width because moving an empty tray
is possible at any given time. The goal serialization fails because it
gets trapped in deadends when serving non-allergic children with
gluten-free sandwiches while leaving insufficiently many gluten-free
sandwiches for the allergic children.

Consider the set of features
$\Phi = \{\cfallergicchild, \cfregularchild, \cfglutenfreesandwichkitchen, \cfsandwichkitchen, \cfglutenfreesandwichtray, \cfsandwichtray \}$
where $\cfallergicchild$ is the number of unserved gluten-allergic children, $\cfregularchild$
is the number of unserved non-allergic children,
$\cfglutenfreesandwichkitchen$ holds iff there is a gluten-free sandwich
in the kitchen, $\cfsandwichkitchen$ holds iff there is a regular
sandwich in the kitchen, $\cfglutenfreesandwichtray$ holds iff there is
a gluten-free sandwich on a tray, and $\cfsandwichtray$ holds iff there
is any sandwich on a tray.
We define the following sketch rules $R_\Phi$:
\begin{align*}
    r_1 &= \prule{\GT{\cfallergicchild},\neg \cfglutenfreesandwichkitchen,\neg \cfglutenfreesandwichtray}{\cfglutenfreesandwichkitchen, \cfsandwichkitchen} \\
    r_2 &= \prule{\EQ{\cfallergicchild}, \GT{\cfregularchild}, \neg \cfsandwichkitchen, \neg \cfsandwichtray}{\cfsandwichkitchen} \\
    r_3 &= \prule{\GT{\cfallergicchild},\cfglutenfreesandwichkitchen, \neg \cfglutenfreesandwichtray}{\UNK{\cfglutenfreesandwichkitchen}, \UNK{\cfsandwichkitchen}, \cfglutenfreesandwichtray, \cfsandwichtray} \\
    r_4 &= \prule{\EQ{\cfallergicchild}, \GT{\cfregularchild}, \cfsandwichkitchen, \neg \cfsandwichtray}{\UNK{\cfglutenfreesandwichkitchen}, \UNK{\cfsandwichkitchen}, \UNK{\cfglutenfreesandwichtray}, \cfsandwichtray} \\
    r_5 &= \prule{\GT{\cfallergicchild}, \cfglutenfreesandwichtray}{\DEC{\cfallergicchild},\UNK{\cfglutenfreesandwichtray}, \UNK{\cfsandwichtray}} \\
    r_6 &= \prule{\EQ{\cfallergicchild}, \GT{\cfregularchild}, \cfsandwichtray}{\DEC{\cfregularchild}, \UNK{\cfglutenfreesandwichtray}, \UNK{\cfsandwichtray}}
\end{align*}
Rule $r_1$ says that making a gluten-free sandwich is good
if there is an unserved gluten-allergic child
and if there is no other gluten-free sandwich currently being served.
Rule $r_2$ says the same thing for non-allergic children
after all gluten-allergic children have been served
and the sandwich to be made is not required to be gluten free.
Rules $r_3$ and $r_4$ say that putting a gluten-free (resp.\ regular)
sandwich from the kitchen onto a tray is good if there is none on a tray yet.
Rule $r_5$ says that serving gluten-allergic children before non-allergic children is good
if there is a gluten-free sandwich available on a tray.
Rule $r_6$ says that serving non-allergic children afterwards is good.

\begin{theorem}
    The sketch for the Childsnack domain is well-formed and has width $1$.
\end{theorem}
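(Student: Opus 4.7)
The plan is to verify the three well-formedness requirements in turn --- goal separation, termination, and a width bound of one on every augmented subproblem $P[s]$ --- following the pattern of the Floortile and TPP proofs. Goal separation is immediate: a state is a goal iff every child is served, which is captured by $\EQ{\cfallergicchild} \wedge \EQ{\cfregularchild}$, so the two numerical features already distinguish the goal states in $\Q$.

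For termination I apply the syntactic elimination procedure. Stage (a) removes $r_5$ (decreases $\cfallergicchild$, which no rule increments) and $r_6$ (decreases $\cfregularchild$, analogously), marking both numerical features. Stage (b) then eliminates the remaining rules one at a time: $r_3$ is the only rule whose condition--effect pair \emph{changes} $\cfglutenfreesandwichtray$ in the syntactic sense of the paper (from false to true; the remaining occurrences in $r_4$ and $r_5$ are via $\UNK$ and so do not count as changes), and no remaining rule changes it in the opposite direction; analogously $r_4$ is unique on $\cfsandwichtray$, then $r_1$ on $\cfglutenfreesandwichkitchen$, and finally $r_2$ on $\cfsandwichkitchen$. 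All six rules are removed, so the sketch is terminating.

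For the width bound I treat the six rules case by case and exhibit an admissible chain of singleton tuples that reaches a subgoal state of the active rule. For $r_1$ and $r_2$ a single make-sandwich action achieves the effect, so the one-element chain $(\cpatkitchensandwich{\cssandwich})$ for a fresh sandwich $\cssandwich$ suffices. For $r_3$ and $r_4$ the rule conditions guarantee both an empty tray and a suitable sandwich already in the kitchen, so the subgoal is reached by moving an empty tray $\cstray$ to the kitchen along a shortest path and then executing a put-on-tray action, yielding the singleton chain $\cpat{\cstray}{\cstable_1},\ldots,\cpat{\cstray}{\cckitchen},\cpontray{\cssandwich}{\cstray}$. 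For $r_5$ and $r_6$ the rule condition guarantees a loaded tray, and the subgoal is reached by moving that tray to the child's table and executing the appropriate serve action, yielding the singleton chain $\cpat{\cstray}{\cstable_1},\ldots,\cpat{\cstray}{\cstable_n},\cpserved{\cschild}$. In every case the largest tuple has size one, so $w_R(\Q) = 1$ by Definition~\ref{def:sketch_width}.

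The main obstacle is the admissibility bookkeeping: verifying that each singleton tuple can be reached from its predecessor by exactly one additional action on an optimal plan, and that no shorter alternative plan reaches the final atom via some other trajectory. This reduces to using the rule condition to rule out shortcut configurations --- for example the condition $\neg\cfglutenfreesandwichtray$ in $r_3$ forces any optimal path to a gluten-free-sandwich-on-tray state to pass through a freshly repositioned empty tray, and for $r_5$ we overapproximate the $R$-reachable states (as the definition of sketch width in the paper explicitly allows) by those with exactly one loaded tray, forcing that tray to be used for serving. Once these observations are in place each singleton chain is routinely admissible.
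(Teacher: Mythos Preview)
Your proposal is correct and matches the paper's proof closely: the same goal-separation argument, the same syntactic termination check (with a harmless reordering of the eliminations of $r_3$ and $r_4$), and the same rule-by-rule construction of singleton-tuple admissible chains. One minor slip: a ``one-element chain'' for $r_1$/$r_2$ cannot work, since the chain's first tuple $t_0$ must already hold in $s$ (and the rule condition forbids the sandwich atom there), so you need a two-tuple chain like the paper's $(\cpnotexists{\cssandwich},\cpnoglutensandwich{\cssandwich})$; this is a cosmetic fix and does not affect the argument.
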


\subsection{Driverlog}

In the Driverlog domain \cite{long-fox-jair2003}, there is a set of drivers, trucks, packages, road
locations and path locations. The two types of locations form two strongly
connected graphs and the two sets of vertices overlap. The road graph is
only traversable by trucks, while the path graph is only traversable by
drivers. A package can be delivered by loading it into a truck, driving
the truck to the target location of the package followed by unloading the
package. Driving the truck requires a driver to be in the truck. Not only
packages, but also trucks and drivers can have goal locations.
\siw fails because it can be necessary to undo previously achieved goals,
like moving a truck away from its destination to transport a package. The
following sketch induces a goal ordering such that an increasing subset of
goal atoms never needs to be undone.

Consider the set of features
$\Phi = \{\dfpackages, \dftrucks, \dfdrivergoal, \dfdrivertruck, \dfboarded, \dfloaded \}$ where
$\dfpackages$ is the number of misplaced packages,
$\dftrucks$ is the number of misplaced trucks,
$\dfdrivergoal$ is the sum of all distances of drivers to their respective goal locations,
$\dfdrivertruck$ is the minimum distance of any driver to a misplaced truck,
$\dfboarded$ is true iff there is a driver inside of a truck, and
$\dfloaded$ is true iff there is a misplaced package in a truck.
We define the sketch rules $R_\Phi$ as follows:
\begin{align*}
    r_1 &= \prule{\GT{\dfpackages}, \neg\dfboarded}{\UNK{\dfdrivergoal}, \UNK{\dfdrivertruck}, \dfboarded} \\
    r_2 &= \prule{\GT{\dfpackages}, \neg\dfloaded}{\UNK{\dftrucks}, \UNK{\dfdrivergoal}, \UNK{\dfdrivertruck}, \dfloaded} \\
    r_3 &= \prule{\GT{\dfpackages}}{\DEC{\dfpackages}, \UNK{\dftrucks}, \UNK{\dfdrivergoal}, \UNK{\dfdrivertruck}, \UNK{\dfloaded}} \\
    r_4 &= \prule{\EQ{\dfpackages}, \GT{\dftrucks}, \GT{\dfdrivertruck}}{\UNK{\dfdrivergoal}, \DEC{\dfdrivertruck}, \UNK{\dfboarded}} \\
    r_5 &= \prule{\EQ{\dfpackages},\GT{\dftrucks}, \EQ{\dfdrivertruck}}{\DEC{\dftrucks}, \UNK{\dfdrivergoal}, \UNK{\dfdrivertruck}} \\
    r_6 &= \prule{\EQ{\dfpackages},\EQ{\dftrucks},\GT{\dfdrivergoal}}{\DEC{\dfdrivergoal}, \UNK{\dfboarded}}
\end{align*}
Rule $r_1$ says that letting a driver board any truck is good
if there are undelivered packages and there is no driver boarded yet.
Rule $r_2$ says that loading an undelivered package is good.
Rule $r_3$ says that delivering a package is good.
Rule $r_4$ says that moving any driver closer to being
in a misplaced truck is good after having delivered all packages.
Rule $r_5$ says that driving a misplaced truck to its target location
is good once all packages are delivered.
Rule $r_6$ says that moving a misplaced driver closer
to its target location is good
after having delivered all packages and trucks.

\begin{theorem}
    The sketch for the Driverlog domain is well-formed and has width $1$.
\end{theorem}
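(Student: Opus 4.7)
The plan is to establish well-formedness first and then bound the sketch width by a uniform case analysis over the six rules, closely following the pattern used in the Floortile and TPP proofs.

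For well-formedness I would first observe that the features are goal-separating: every goal state of $P\in\Q$ is characterised exactly by the Boolean valuation $\EQ{\dfpackages}\land\EQ{\dftrucks}\land\EQ{\dfdrivergoal}$, since having every package, truck and driver at its target location is the definition of a Driverlog goal. For termination I would apply the syntactic procedure in order. Case (a) eliminates $r_3$, because $r_3$ decreases $\dfpackages$ and no other rule's effect mentions $\dfpackages$; this marks $\dfpackages$. Next, $r_5$ decreases $\dftrucks$ and the only surviving rule that can change $\dftrucks$ is $r_2$ via $\UNK{\dftrucks}$, but $r_5$'s condition $\EQ{\dfpackages}$ is complementary to $r_2$'s condition $\GT{\dfpackages}$ on the marked feature $\dfpackages$, so case (d) eliminates $r_5$ and marks $\dftrucks$. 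Analogously, case (d) removes $r_6$ (against $r_1, r_2, r_4$ using marked $\dfpackages$ or $\dftrucks$) and marks $\dfdrivergoal$, and removes $r_4$ (against $r_1, r_2$ using marked $\dfpackages$), marking $\dfdrivertruck$. Finally $r_1$ is the only surviving rule that touches $\dfboarded$ and $r_2$ the only one that touches $\dfloaded$, so case (b) removes them both.

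For the sketch width I plan to proceed rule by rule, exhibiting for every $R$-reachable state $s$ in which a rule $r$ fires an admissible chain of singleton tuples witnessing $w(P[s])\le 1$. The recipe in each case is to pick one \emph{witness} driver, truck or package whose movement realises the closest subgoal state in $G_r(s)$ and then trace that movement atom by atom. For $r_1$ I pick a driver $d$ and an unoccupied truck $t$ minimising the driver-path distance and use the chain $(\dpat{d}{p_1}),\dots,(\dpat{d}{p_n}),(\dpdriving{d}{t})$ along a shortest path $p_1,\dots,p_n$. For $r_2$ the boarded truck drives a shortest road route to the nearest misplaced package and loads it; for $r_3$ the loaded truck drives to that package's target and unloads. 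For $r_4$, $r_5$ and $r_6$ the chains record respectively the shortest walk of a driver toward a nearest misplaced truck, the shortest drive of such a truck to its target with its driver on board, and the shortest walk of a misplaced driver to its own target. The generous $\UNK{\cdot}$ effects in each rule let the ancillary features vary freely along the chain, so singleton tuples suffice.

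The main obstacle is verifying admissibility in the strict sense of Definition~\ref{def:width}: each optimal plan for $t_m$ must also be an optimal plan for $P[s]$ under the enlarged goal $G\cup\bigcup_{r\in R}G_r(s)$. I would discharge this by (i) choosing the witness so as to minimise the distance to the nearest subgoal, making the walk-and-act plan length equal to the distance from $s$ to that subgoal; (ii) invoking the disjointness of the driver-path graph and the truck-road graph to rule out shortcuts involving another agent; and (iii) using the rule preconditions to constrain the situation, for instance $r_4$ fires only after all packages have been delivered, so the optimal plan reduces to a pure driver walk, and $r_1$ fires only when no driver is boarded, so the witness walk does not compete with an already-boarded driver. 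Once these three points are secured, the per-rule bookkeeping mirrors the TPP proof.
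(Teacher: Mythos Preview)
Your proposal is essentially correct and follows the same structure as the paper's proof: goal-separation via $\EQ{\dfpackages}\land\EQ{\dftrucks}\land\EQ{\dfdrivergoal}$, termination via the syntactic elimination procedure, and width~$1$ by exhibiting singleton-tuple admissible chains rule by rule, phased as packages $\to$ trucks $\to$ drivers. Your elimination order ($r_3,r_5,r_6,r_4,r_1,r_2$) differs from the paper's ($r_3,r_5,r_2,r_4,r_6,r_1$) but is equally valid under the stated cases.

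Two small points to tighten. First, for $r_4$ and $r_6$ the paper explicitly splits off the case where the relevant driver is currently \emph{boarded} in some (wrong) truck, in which case the first step of the admissible chain is an unboard action, e.g.\ $(\dpdriving{\dsdriver}{\dstruck'},\dpat{\dsdriver}{\dslocation_1})$; your phrase ``pure driver walk'' skips over this, and since $r_6$ carries $\UNK{\dfboarded}$ precisely to accommodate it, you should state it. Second, your point (ii) speaks of ``disjointness of the driver-path graph and the truck-road graph,'' but the domain description says the vertex sets overlap; what you actually need (and what the paper uses implicitly) is that drivers cannot traverse road edges and trucks cannot traverse path edges, so when $\EQ{\dfpackages}$ holds and no truck may move without disturbing $\dftrucks$, the only way to decrease $\dfdrivertruck$ or $\dfdrivergoal$ is a walk/board/unboard step. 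With those two clarifications your argument matches the paper's.
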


\subsection{Schedule}

In the Schedule domain \cite{bacchus-aimag2001}, there is a set of objects that
can have different values for the following attributes: shape, color,
surface condition, and temperature. Also, there is a set of machines where
each is capable of changing an attribute with the side effect that other
attributes change as well. For example, rolling an object changes its
shape to cylindrical and has the side effect that the color changes to
uncolored, any surface condition is removed, and the object becomes hot.
Often, there are multiple different work steps for achieving a specific
attribute of an object. For example, both rolling and lathing change an
object's shape to cylindrical. But rolling makes the object hot, while
lathing keeps its temperature cold. Some work steps are only possible if
the object is cold. Multiple work steps can be scheduled to available
machines, which sets the machine's status to occupied. All machines become
available again after a single do-time-step action. The goal is to change
the attributes of objects.

\siw fails in Schedule because it gets trapped into deadends when an object's temperature
becomes hot, possibly blocking other required attribute changes.
The following sketch uses this observation and defines
an ordering over achieved attributes where
first, the desired shapes are achieved,
second, the desired surface conditions are achieved, and
third, the desired colors are achieved.

Consider the set of features $\Phi = \{\sfshape, \sfsurface, \sfcolor, \sfhot, \sfoccupied\}$ where
$\sfshape$ is the number of objects with wrong shape,
$\sfsurface$ is the number of objects with wrong surface condition,
$\sfcolor$ is the number of objects with wrong color,
$\sfhot$ is the number of hot objects, and
$\sfoccupied$ is true iff there is an object scheduled or a machine occupied.
We define the following sketch rules $R_\Phi$:
\begin{align*}
    r_1 &= \prule{\GT{\sfshape}}{\DEC{\sfshape},\UNK{\sfsurface},\UNK{\sfcolor}, \sfoccupied} \\
    r_2 &= \prule{\EQ{\sfshape},\GT{\sfsurface}}{\DEC{\sfsurface},\UNK{\sfcolor},\sfoccupied} \\
    r_3 &= \prule{\EQ{\sfshape},\EQ{\sfsurface},\GT{\sfcolor}}{\DEC{\sfcolor}, \sfoccupied} \\
    r_4 &= \prule{\sfoccupied}{\neg \sfoccupied}
\end{align*}
Rule $r_1$ says that achieving an object's goal shape is good.
Rule $r_2$ says that achieving an object's goal surface condition is good
after achieving all goal shapes.
Rule $r_3$ says that achieving an object's goal color is good
after achieving all goal shapes and surface conditions.
Rule $r_4$ says that making objects and machines available is good.
Note that $r_4$ does not decrease the sketch width but it decreases the
search time by decreasing the search depth. Note also that $\sfhot$ never
occurs in any rule because we want its value to remain constant.

\begin{theorem}
    The sketch for the Schedule domain is well-formed and has width $2$.
\end{theorem}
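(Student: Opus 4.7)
The plan is to verify the three well-formedness conditions and the width bound in sequence. Goal separation is immediate: a state $s$ is a goal of an instance in $\Q$ iff every object has its desired shape, surface, and color, i.e., iff $\EQ{\sfshape} \land \EQ{\sfsurface} \land \EQ{\sfcolor}$ holds, which picks out a single Boolean goal valuation over $\Phi$.

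For termination I would run the syntactic elimination procedure introduced earlier. Rules $r_1$, $r_2$, and $r_3$ each decrement a numerical feature ($\sfshape$, $\sfsurface$, and $\sfcolor$ respectively) that no other rule changes, so they can be removed one after another and their features marked. The only remaining rule $r_4$ flips $\sfoccupied$ from true to false, and since the rules that set $\sfoccupied$ to true have already been eliminated, $r_4$ is removed as well and the sketch is terminating.

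For the width bound I would case-split on the rule applicable at an $R$-reachable state $s$. For $r_4$, the single do-time-step action reaches $G_{r_4}(s)$ in one step, so a chain of singleton tuples suffices. For each of $r_1$, $r_2$, and $r_3$, I would pick an object $o$ whose relevant attribute is wrong, a free machine, and a work step $w$ that sets that attribute to its goal value without altering $o$'s temperature; such a work step always exists in the IPC Schedule encoding. The final tuple of the admissible chain is a \emph{pair} consisting of the goal-attribute atom on $o$ together with an atom pinning $o$'s current temperature, and when $\sfoccupied(s)$ is true we prepend a do-time-step transition whose tuple also has size at most two. All tuples then have size at most $2$, so the sketch width is bounded by $2$.

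The main obstacle I anticipate is justifying the pair in the final tuple, which is what prevents the width from collapsing to $1$. Since $\sfhot$ does not appear in any rule effect, every subgoal state must agree with $s$ on $\sfhot$, yet several Schedule work steps achieve the same attribute change with different temperature side effects. A chain whose last tuple is just the singleton attribute atom on $o$ fails admissibility, because an optimal plan for it could pick a temperature-changing variant of the work step, landing in a state outside $G_{r_i}(s)$. Pairing the attribute atom with an atom pinning $o$'s temperature rules out such plans and raises the tuple size to $2$. Together with the fixed rule order (shape before surface before color) ensuring that previously achieved attributes are not undone by subsequent work steps, this establishes the claimed sketch width of $2$.
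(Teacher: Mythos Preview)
Your proposal is correct and follows essentially the same approach as the paper: goal separation via $\EQ{\sfshape}\land\EQ{\sfsurface}\land\EQ{\sfcolor}$, termination by sequentially eliminating $r_1,r_2,r_3,r_4$ with the syntactic procedure, and width~$2$ established by pairing the target attribute atom with a temperature atom so that the $\sfhot$-preserving constraint (forced because $\sfhot$ appears in no effect) is respected. One minor wording issue: it is not true that ``no other rule changes'' $\sfsurface$ or $\sfcolor$ (rule $r_1$ has $\UNK{\sfsurface},\UNK{\sfcolor}$ and $r_2$ has $\UNK{\sfcolor}$), so the elimination only goes through in the stated order $r_1\to r_2\to r_3$, which your phrase ``one after another'' does suggest but should be made explicit.
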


\section{Experiments}

Even though the focus of our work is on proving polynomial runtime bounds
for planning domains theoretically, we evaluate in this section how these
runtime guarantees translate into practice. We implemented \siwR in the
LAPKT planning system \cite{ramirez-et-al-misc2015} and use the Lab toolkit
\cite{seipp-et-al-zenodo2017} for running experiments on Intel Xeon Gold
6130 CPU cores. For each planning domain, we use the tasks from previous
IPCs. For each planner run, we limit time and memory by 30 minutes and 4
GiB. The benchmark set consists of a subset of tractable classical planning
domains from the satisficing track of the 1998-2018 IPC where top goal serialization using SIW fails.

The main question we want to answer empirically is how much an SIW
search benefits from using policy sketches. To this end, we compare
\siw{}(2) and \siwR{}(2) with the sketches for the planning domains
introduced above. We use a width bound of $k$=$2$, since \siw{}($k$) and
\siwR{}($k$) are too slow to compute in practice for larger values of $k$.
We also include two state-of-the-art planners, \lama \cite{richter-westphal-jair2010} and \bfws
\cite{lipovetzky-et-al-aaai2017}, to show that the planning tasks in our benchmark set
are hard to solve with the strongest planners.

\newcommand{\numtasks}[1]{\small{(#1)}}
\newcommand{\asep}{\hskip 15pt}
\setlength{\tabcolsep}{4pt}
\begin{table*}[tbp]
    \setlength{\cmidrulekern}{15pt}
    \centering
    \begin{tabular}{@{}l@{\asep}rrrr@{\asep}rrrr@{\asep}rr@{\asep}rr@{}}
        & \multicolumn{4}{c}{\siw{}(2)} & \multicolumn{4}{c}{\siwR{}(2)} & \multicolumn{2}{c}{\lama{} ~~~~~} & \multicolumn{2}{c}{\bfws{}} \\
        \cmidrule(r){2-5}
        \cmidrule(r){6-9}
        \cmidrule(r){10-11}
        \cmidrule(){12-13}
        Domain & S & T & AW & MW & S & T & AW & MW & S & T & S & T \\
        \midrule
        Barman \numtasks{40} & 0 & -- & -- & -- & 40 & 0.9 & 1.17 & 2 & 40 & 505.3 & 40 & 162.8 \\
        Childsnack \numtasks{20} & 0 & -- & -- & -- & 20 & 10.8 & 1.00 & 1 & 6 & 2.6 & 8 & 216.9 \\
        Driverlog \numtasks{20} & 8 & 0.5 & 1.68 & 2 & 20 & 0.8 & 1.00 & 1 & 20 & 7.6 & 20 & 4.2 \\
        Floortile \numtasks{20} & 0 & -- & -- & -- & 20 & 0.2 & 1.25 & 2 & 2 & 9.9 & 2 & 176.3 \\
        Grid \numtasks{5} & 1 & 0.1 & 2.00 & 2 & 5 & 0.1 & 1.00 & 1 & 5 & 3.6 & 5 & 3.7 \\
        Schedule \numtasks{150} & 62 & 1349.1 & 1.10 & 2 & 150 & 54.7 & 1.17 & 2 & 150 & 15.3 & 150 & 151.4 \\
        TPP \numtasks{30} & 11 & 74.7 & 2.00 & 2 & 30 & 0.4 & 1.00 & 1 & 30 & 16.5 & 29 & 99.6 \\
    \end{tabular}
    \caption{Comparison of \siw{}(2), \siwR{}(2), the first iteration of
    \lama{}, and \bfws{}. It shows the number of solved tasks (S), the
    maximum runtime in seconds for a successful run (T), the average
    effective width over all encountered subtasks (AW), and the maximum
    effective width over all encountered subtasks (MW).}
    \label{table:results}
\end{table*}

Table~\ref{table:results} shows results for the four planners. We see that
the maximum effective width (MW) for \siwR{}(2) never exceeds the
theoretical upper bounds we established in the previous section. For
the domains with sketch width $2$, the average effective width (AW) is
always closer to $1$ than to $2$.

In the comparison we must keep in mind
that \siwR is not a domain-independent planner
as it uses a suitable sketch for each domain.
\siw{}(2) solves none of the instances in three domains (Barman,
Childsnack, Floortile) because the problem width is too large. In
the other four domains, it never solves more than half of the tasks. In
contrast, \siwR{}(2) solves all tasks and is usually very fast. For
example, \siw{}(2) needs 74.7 seconds to solve the eleventh hardest TPP
task, while \siwR{}(2) solves all 30 tasks in at most 0.4 seconds. This
shows that our sketch rules capture useful information and that the sketch
features are indeed cheap to compute.

Even with the caveats about planner comparisons in mind, the results from
Table~\ref{table:results} show that policy sketches usually let \siwR
solve the tasks from our benchmark set much faster than state-of-the-art
domain-independent planners. The only exception is Schedule, where \lama
has a lower maximum runtime than \siwR. The main bottleneck for \siwR in
Schedule is generating successor states. Computing feature valuations on
the other hand takes negligible time.

Overall, our results show that adding domain-specific knowledge in the
form of sketches to a width-based planner allows it to solve whole problem
domains very efficiently. This raises interesting research questions about
whether we can learn sketches automatically to transform \siwR into a
domain-independent planner that can reuse previously acquired information.

\section{Related Work}

We showed that a bit of knowledge about the subgoal structure of a domain,
expressed elegantly in the form of compact sketches,
can go a long way in solving the instances of a domain
efficiently, in provable polynomial time.
There are other approaches for expressing domain control knowledge
for planning in the literature, and we review them here.

\newcommand{\tlplan}{\ensuremath{\textsc{TLplan}}}

The distinction between the actions that are ``good'' or ``bad''
in a fixed tractable domain can often be characterized explicitly.
Indeed, the so-called \textbf{general policies}, unlike sketches,  provide such
a classification of all possible state transitions $(s,s')$ over the problems in $\Q$ \cite{frances-et-al-aaai2021},
and ensure that the goals can always  be reached by following any good transitions.
Sketch rules have the same syntax as policy rules,
but they do not constraint state transitions but define subgoals.

Logical approaches to domain control have been used to provide partial information
about good and bad state transitions in terms of suitable formulas \cite{bacchus-kabanza-aij2000,kvarnstrom-doherty-amai2000}.
For example, for the Schedule domain, one may have a formula in \textbf{linear temporal logic} (LTL)
expressing that objects that need to be lathed and painted should
not be painted in the next time step, since lathing removes the paint.
This partial information about good and bad transitions
can then be used by a forward-state search planner to heavily
prune the state space. A key difference between these formulas
and sketches is that sketch rules are not about state transitions
but about subgoals, and hence they structure the search for plans
in a different way, in certain cases ensuring a polynomial search.

\intextcite{baier-et-al-aaai2008} combine control knowledge and preference
formulas to improve search behavior and obtain plans of high quality,
according to user preferences. The control knowledge is given in the Golog
language and defines subgoals such that a planner has to fill in the
missing parts. Since the control knowledge is compiled directly to PDDL,
they are able to leverage off-the-shelve planners. The user preferences
are encoded in an LTL-like language. Like our policy sketches, their
approach can be applied to any domain. However, policy sketches aim at
ensuring polynomial searches in tractable domains.

Hierarchical task networks or \textbf{HTNs} are used mainly for expressing
general top-down strategies for solving classes of planning problems
\cite{erol-et-al-aaai1994,nau-et-al-jair2003,georgievski-aiello-aij2015}. The domain knowledge
is normally expressed in terms of a hierarchy of methods
that have to be decomposed into primitive methods that cannot
be decomposed any further. While the solution strategy expressed in HTNs does not have to be complete,
it is often close to complete, as otherwise the search for decompositions easily becomes intractable.
For this reason, crafting good and effective HTNs encodings is not easy.
For example, the HTN formulation of the Barman domain
in the 2020 Hierarchical Planning Competition \cite{holler-et-al-icapswhp2019} contains
10 high-level tasks (like \emph{AchieveContainsShakerIngredient}), 11 primitive tasks (like
\emph{clean-shot}) and 22 methods (like \emph{MakeAndPourCocktail}).
In contrast, the PDDL version of Barman has only 12 action schemas,
and the sketch above has 4 rules over 4 linear features.
Note, however, that comparing different forms of control knowledge in terms of their compactness
is not well-defined.

Finally, the need to represent the common subgoal structure of dynamic domains
arises also in reinforcement learning (RL), where knowledge gained in the solution of some
domain instances can be applied to speed up the learning of solutions to new
instances of the same family of tasks \cite{finn-et-al-icml2017}.
In recent work in deep RL (DRL)  these representations, in the form of general \textbf{intrinsic
reward functions} \cite{singh-et-al-ieeetamd2021},  are expected to be learned from suitable DRL architectures
\cite{zheng-et-al-icml2020}. Sketches provide a convenient high-level alternative to describe
common subgoal structures, but opposed to the related work in DRL, the policy sketches above
are not learned but are written by hand. We leave the challenge of automatically learning sketches
as future work and describe it briefly below.

\section{Conclusions and Future Work}

We have  shown that the language of policy sketches as introduced by Bonet and Geffner
provides a simple, elegant, and powerful way for expressing the common subgoal structure of
many planning domains. The \siwR algorithm can then solve these domains effectively,
in provable polynomial time, where \siw fails either because the problems
are not easily serializable in terms of the top goals or because
some of the resulting  subproblems have a high width. A big  advantage
of pure width-based algorithms like \siw and \siwR is that unlike
other planning-based methods they can be used to plan with simulators
in which the structure of states  is available but the structure of  actions
is not.\footnote{
A minor difference then is that the novelty tests in IW($k$)
are not exponential in $k-1$ but in $k$.}

A logical next step in this line of work is to \textbf{learn sketches automatically}.
In principle, methods similar to those used for learning general policies can be applied.
These methods rely on using the state language (primitive PDDL
predicates) for defining a large pool of Boolean and numerical features
via a description logic grammar \cite{baader-et-al-2003}, from which
the features $\Phi$ are selected and over which the general policies $\pi_\Phi$
are constructed \cite{frances-et-al-aaai2021}. We have actually analyzed the features
used in the sketches given above and have noticed that they  can all  be
obtained from a feature pool generated in this way. A longer-term challenge
is to learn the sketches automatically when using the same inputs as DRL algorithms,
where there is no state representation language. Recent works that learn
first-order symbolic languages from black box states or from states represented by images
\cite{bonet-geffner-ecai2020,asai-icaps2019} are important first steps in that direction.

\section*{Acknowledgments}

This work was partially supported by an ERC Advanced Grant (grant agreement no.\ 885107),
by project TAILOR, funded by EU Horizon 2020 (grant agreement no.\ 952215), and
by the Knut and Alice Wallenberg Foundation.
Hector Geffner is a Wallenberg Guest Professor at Link{\"o}ping University, Sweden.
We used compute resources from the Swedish National Infrastructure for
Computing (SNIC), partially funded by the Swedish Research Council through
grant agreement no.\ 2018-05973.

\section{Appendix}

In this section, we provide proofs for the theorems
that claim the sketches are well-formed and have bounded width,
and describe how the features are computed.

\appendix

\section{Proofs for Sketches}

\begin{subappendices}
\subsection{Grid}
\begin{theorem}
    The sketch for the Grid domain is well-formed and has sketch width $1$.
\end{theorem}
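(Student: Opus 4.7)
The plan is to split the argument into three parts — goal separation, termination, and sketch width — mirroring the Floortile and TPP proofs presented in the paper.

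Goal separation is immediate: the Grid goals fix certain keys to target cells, which is equivalent to $\EQ{\gfkey}$, so a state is a goal state iff $\gfkey$ evaluates to $0$. For termination I would apply the syntactic procedure in four steps. First, $r_1$ is the only rule whose effects touch $\gflocked$ and it decreases $\gflocked$, so eliminate $r_1$ and mark $\gflocked$. Second, with $r_1$ gone, $r_2$ is the only remaining rule that touches $\gfkey$ and it decreases it, so eliminate $r_2$ and mark $\gfkey$. Third, apply case (d) to remove $r_3$: it forces $\gfopen$ to true, and the only remaining rule that could flip $\gfopen$ via $\UNK{\gfopen}$ is $r_4$, but $\GT{\gflocked}$ in $r_3$ and $\EQ{\gflocked}$ in $r_4$ are complementary on the marked feature $\gflocked$. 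Fourth, $r_4$ is now the only remaining rule and is eliminated via case (b).

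For sketch width, the plan is to show by cases on the active rule in an $R$-reachable state $s$ that $P[s]$ admits an admissible chain of size-$1$ tuples. When $r_1$'s conditions hold and $\gfopen$ is true, the robot already carries a key matching some closed lock $d$; the chain consists of position atoms $\gpatrobot{c_1}, \ldots, \gpatrobot{c_n}$ along a shortest path to a cell adjacent to $d$, followed by $\gpopen{d}$. When $r_1$'s conditions hold but $\gfopen$ is false, the closest subgoal is the one of $r_3$, since unlocking cannot happen in one step without a matching key; the chain then walks to a matching-key cell and ends with $\gpholding{e}$, with the final action being a pickup or a key exchange. The $r_2$ and $r_4$ cases are symmetric: with $\gftarget$ true the chain walks to the target cell and ends with $\gpat{e}{c}$, while with $\gftarget$ false it walks to a misplaced key and ends with $\gpholding{e}$. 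Every tuple is a single atom, so $P[s]$ has width at most $1$.

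The main obstacle I foresee is justifying reachability and optimality of these prefixes. I need the destination cell (a key, a lock-adjacent cell, or a target cell) to actually be reachable from $s$ under the current lock configuration, and I need the one-action extension (pickup, swap, drop, or unlock) to yield an optimal plan for $P[s]$. For reachability I would lean on the problem's stipulation that every lock is initially reachable together with the monotonicity that opened locks stay open, so no $R$-reachable state loses reachability compared to $s_0$. For optimality I would observe that the chain targets exactly the closest element of $G \cup \bigcup_r G_r(s)$ — matching the IW search performed by \siwR — and that each final pickup, exchange, drop, or unlock corresponds to a primitive operator in the Grid action set, so appending it to the navigation prefix produces an optimal plan for $P[s]$.
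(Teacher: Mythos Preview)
Your proposal is correct and follows essentially the same approach as the paper's proof: identical goal-separation and termination arguments (same four elimination steps in the same order), and the same width-$1$ argument via size-$1$ admissible chains that navigate the robot and end with a single pickup/exchange/drop/unlock action. The only difference is organizational---you split the $\GT{\gflocked}$ case by whether $\gfopen$ holds, while the paper treats $r_3$ and $r_1$ as separate cases and then notes that the $r_1$ analysis can assume a matching key is already held because the $r_3$ case establishes that picking one up has width $1$---but the underlying reasoning is the same.
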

\begin{proof}
    The features $\Phi$ are goal separating because the feature valuation
    $\EQ{\gfkey}$ holds in state $s$ iff $s$ is a goal state.
    We show that the sketch is terminating by iteratively eliminating rules:
    $r_1$ decreases $\gflocked$ which no other rule increases, so we eliminate $r_1$ and
    mark $\gflocked$. Now $r_2$ can be eliminated because it decreases
    $\gfkey$ which no remaining rule increases. We can now eliminate $r_3
    = C \mapsto E$ because it changes the Boolean feature $\gfopen$ and
    the only other remaining rule $r_4 = C'\mapsto E'$ may restore the
    value of $\gfopen$, but this can only happen finitely often, since
    $\gflocked$ is marked and $\GT{\gflocked} \in C$ and $\EQ{\gflocked}
    \in C'$. Now only $r_4$ remains and we can eliminate it since it
    changes $\gftarget$, which is never changed back.

    It remains to show that the sketch width is $1$.
    Consider any Grid instance $P$ with states $S$.
    Note that depending on the initial state $s$ the feature conditions of at least one rule $r$ are true
    in $s$ and its subgoal $G_r(s)$ are nonempty.
    We first consider rule $r_3$.
    Intuitively, we show that picking up a key that can be used to open some closed lock has width $1$.
    Consider states $S_1\subseteq S$ where the feature conditions of $r_3$ are true, i.e.,
    states where there is a closed lock and the robot does not hold a key $\gskey$
    that can be used to open a closed lock.
    With $G_{r_3}(s)$ we denote the subgoal states of $r_3$ in $s\in S_1$,
    i.e., states where the robot holds $\gskey$.
    The tuple $\gpholding{\gskey}$ implies $G_{r_3}(s)$ in $s\in S_1$
    in the admissible chain that consists of changing the position of the robot
    from the current position $\gscell_1$ to the position $\gscell_n$ of $\gskey$
    ordered by the distance to $\gscell_n$, and followed by exchanging or picking $\gskey$, i.e.,
    $(\gpatrobot{\gscell_1},\ldots,\gpatrobot{\gscell_n},\gpholding{\gskey})$.
    Note that the feature conditions of $r_1$ are true in states $s'$
    in subgoal $G_{r_3}(s)$ with nonempty subgoal states $G_{r_1}(s')$
    because the number of closed locks remains greater than $0$.

    Next, we consider rule $r_1$.
    Intuitively, we show that opening a closed lock has width $1$.
    Consider states $S_2\subseteq S$ where feature conditions of $r_1$ are true and the robot holds
    a key $\gskey$ that can be used to open a closed lock $\gslock$.
    We can transform states where the robot holds no key into a state from $S_2$ by
    letting it pick a key with width $1$ (see above).
    With $G_{r_1}(s)$ we denote the subgoal states of $r_1$ in $s\in S_2$, i.e., states where $\gslock$ is open.
    The tuple $\gpopen{\gslock}$ implies $G_{r_1}(s)$ in $s\in S_2$
    in the admissible chain that consists of changing the position of the robot
    from its current position $\gscell_1$ to a position $\gscell_n$ next to lock $\gslock$
    ordered by the distance to $\gscell_n$, i.e.,
    $(\gpatrobot{\gscell_1},\ldots,\gpatrobot{\gscell_n},\gpopen{\gslock})$.
    Note that either there are still closed locks that can be opened by repeated
    usage of rules $r_1$ and $r_3$ or the feature conditions of $r_2$ or $r_4$ are true
    in states $s'$ in the subgoal $G_{r_1}(s)$ with nonempty subgoal states $G_{r_2}(s')$
    or $G_{r_4}(s')$ respectively because there are misplaced keys.
    Hence, it remains to show that if all locks are open then well-placing keys has width $1$.

    Next, we consider rule $r_4$.
    Intuitively, we show that picking up a key that is not at its target cell has width $1$.
    Consider states $S_3\subseteq S$ where the feature conditions of $r_4$ are true, i.e.,
    states where all locks are open and the robot does not hold a misplaced key.
    With $G_{r_4}(s)$ we denote the subgoal states of $r_4$ in $s\in S_3$, i.e.,
    states where the robot holds $\gskey$.
    The tuple $\gpholding{\gskey}$ implies $G_{r_4}(s)$ in $s\in S_3$
    in the admissible chain that consists of changing the position of the robot
    from the current position $\gscell_1$ to the position $\gscell_n$ of $\gskey$
    ordered by the distance to $\gscell_n$, and followed by exchanging or picking $\gskey$, i.e.,
    $(\gpatrobot{\gscell_1},\ldots,\gpatrobot{\gscell_n},\gpholding{\gskey})$.
    Note that the feature conditions of $r_2$ are true in states $s'$
    in subgoal $G_{r_4}(s)$ with nonempty subgoal states $G_{r_2}(s')$
    because the number of misplaced keys remains greater than $0$.

    Finally, we consider rule $r_2$.
    Intuitively, we show that moving a key to its target cell has width $1$.
    Consider states $S_4\subseteq S$ where the feature conditions of $r_2$ are true
    and the robot holds a misplaced key $\gskey$.
    As before, we can transform states $s' \notin S_4$ into such a state $s$ by picking up $\gskey$ with width $1$.
    With $G_{r_2}(s)$ we denote the subgoal states of $r_2$ in $s\in S_4$, i.e.,
    states where $\gskey$ is at its target cell.
    The tuple $\gpat{\gskey}{\gscell_n}$ implies $G_{r_2}(s)$ in $s\in S_4$
    in the admissible chain that consists of changing the position of the robot
    from its current position $\gscell_1$ to the key's target cell $\gscell_n$
    ordered by the distance to $\gscell_n$, followed by exchanging or dropping the key at $\gscell_n$, i.e.,
    $(\gpatrobot{\gscell_1},\ldots,\gpatrobot{\gscell_n},\gpat{\gskey}{\gscell_n})$.

    We obtain sketch width $1$ because all tuples in admissible chains have size $1$.
\end{proof}

\subsection{Barman}

\begin{theorem}
    The sketch for the Barman domain is well-formed and has sketch width $2$.
\end{theorem}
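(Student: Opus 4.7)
The plan follows the template used for Floortile and TPP. Goal-separation is immediate: $\EQ{\bfunserved}$ holds in $s$ iff every goal beverage is served, so $s$ is a goal state. For termination I would apply the syntactic procedure iteratively: $r_4$ decreases $\bfunserved$ which no other rule increments, so eliminate it and mark $\bfunserved$; then $r_3$ decreases $\bfused$ which no remaining rule increments, so eliminate it and mark $\bfused$; then $r_2$ changes $\bfconsistenttwo$ from false to true, and the only rule that could undo this (via $\UNK{\bfconsistenttwo}$ in $r_4$) has already been removed, so eliminate $r_2$; finally $r_1$ sets $\bfconsistentone$ to true with no remaining rule altering it, so eliminate $r_1$.

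For the width bound I would first verify $R$-reachability: in any non-goal state at least one rule has a nonempty subgoal, and the rules chain naturally as $r_1 \to r_2 \to r_4$ for cocktails (with $r_3$ interleaved when a clean shot is needed) and as $r_4$ directly for non-cocktail ingredients. I would then bound the width of $P[s]$ for every $R$-reachable $s$ by case analysis on the triggered rule. The simple cases are $r_3$, whose admissible chain grasps a used shot and cleans it with size-$1$ tuples, and $r_4$ applied to a single ingredient, whose chain grasps a clean shot and fills it at the ingredient's dispenser with tuples such as $(\bpholding{\bshand}{\bsshot}, \bpcontains{\bsshot}{\bsingredient})$ bounded by $2$.

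The main obstacle, and the reason the sketch width is $2$ rather than $1$, is $r_1$ and $r_2$ together with the cocktail case of $r_4$. To fill the shaker with an ingredient the barman must simultaneously hold the shaker in one hand and a filled shot in the other before applying the pour-shot-to-shaker action; if both hands start occupied, the chain must first free a hand, then grasp the shaker, then grasp a clean shot, fill it at the correct dispenser, and finally pour into the shaker. The key tuples in the chain have the form $(\bpholding{\bshand_1}{\bsshaker}, \bpholding{\bshand_2}{\bsshot})$ and $(\bpholding{\bshand_1}{\bsshaker}, \bpcontains{\bsshot}{\bsingredient})$, culminating in the size-$1$ tuple $\bpcontains{\bsshaker}{\bsingredient}$ that implies the subgoal of $r_1$; the same structure works for $r_2$ since the shaker already contains the first ingredient, and for the cocktail case of $r_4$ where shake-shaker and pour-shaker-to-shot are coordinated with a clean target shot. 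The delicate step I expect to spend the most care on is arguing that every such pair of atoms is witnessed by an optimal plan that extends to the next pair by a single action; this should follow from the symmetry of the two hands and the single-shaker assumption, which rules out spurious cleaning detours on the shaker itself that might otherwise force a size-$3$ tuple.
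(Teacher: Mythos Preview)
Your termination argument contains a genuine error in the elimination order. After removing $r_4$, you try to eliminate $r_3$ via case~(a), claiming that no remaining rule increments $\bfused$. But both $r_1$ and $r_2$ carry $\UNK{\bfused}$ in their effects, which by the definition of the syntactic procedure counts as ``can increase'' $\bfused$. Hence $r_3$ is \emph{not} eliminable at that point, and case~(d) does not help either since the only marked feature so far, $\bfunserved$, does not occur in the conditions of $r_1$, $r_2$, or $r_3$. The paper uses the order $r_4$, then $r_1$ and $r_2$ (each sets a Boolean feature that only the already-removed $r_4$ could flip back via $\UNK{\bfconsistentone}$, $\UNK{\bfconsistenttwo}$), and only then $r_3$, which at that stage is the sole remaining rule.

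Your width argument is in the right spirit but misidentifies the source of the size-$2$ tuples. You attribute width~$2$ for $r_1$ to the need to hold the shaker and a filled shot simultaneously before pouring; in the standard Barman encoding, however, pouring a shot into the shaker only requires holding the shot while the shaker sits on the table. The paper's reason for size-$2$ tuples in the $r_1$ chain is instead that the shaker may still contain leftover beverage from a previous cocktail and must first be emptied and cleaned: the chain tracks pairs such as $(\bpholding{\bshand}{\bsshaker},\bpclean{\bsshaker})$ and $(\bpontable{\bsshaker},\bpclean{\bsshaker})$ to serialize the cleaning phase against the subsequent shot-filling phase, ending in $(\bpcontains{\bsshaker}{\bsingredient_1},\bpshakerlevel{\bsshaker}{\bclevelone})$. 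Your proposed terminal size-$1$ tuple $\bpcontains{\bsshaker}{\bsingredient}$ and the two-handed holding tuples would need to be reworked accordingly. The paper also shows that the cocktail case of $r_4$ and the $r_2$ case in fact admit chains of size at most~$2$ (and $r_3$ of size~$1$), so the overall bound of~$2$ stands once the chains are built on the correct domain mechanics.
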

\begin{proof}
    The features $\Phi$ are goal separating because
    $\EQ{\bfunserved}$ holds in state $s$ iff $s$ is a goal state.
    We show that the sketch is terminating by iteratively eliminating rules:
    first, we eliminate $r_4$ because it
    decreases the numerical feature $\bfunserved$ that no rule increases.
    Next, rules $r_1$ and $r_2$ can be eliminated because both
    change a Boolean feature that no remaining rule changes in the opposite direction.
    Last, we eliminate the $r_3$ because it decrements the numerical feature $\bfused$.

    It remains to show that the sketch width is $2$.
    Consider any Barman instance $P$ with states $S$.
    In the initial state $s$ the feature condition of $r_4$ are true,
    and the subgoal $G_{r_4}(s)$ is nonempty.
    Note that using $r_4$ to reach a subgoal decreases the number of unserved beverages
    until the overall goal is reached. Hence, $r_4$ can be seen as the goal counter.
    If the beverage to be served is a cocktail or if the shots are dirty,
    then this subproblem can be further decomposed into smaller subproblems using rules $r_1,r_2,r_3$ as follows.
    Producing a cocktail requires filling the shaker with correct ingredients
    and can be achieved by successively reaching the subgoals defined by rules $r_1$ and $r_2$.
    Next, if the shot required for serving was made dirty during this process,
    then $r_3$ defines the subgoal of cleaning it again.
    Finally, $r_4$ defines the subgoal of serving the cocktail.

    We first consider rule $r_3$.
    Intuitively, we show that shots are cleaned with width at most $1$.
    Consider all states $S_1\subseteq S$ where the feature conditions of $r_3$ are true, i.e.,
    states where there is a used shot $\bsshot$
    such that $\bpused{\bsshot}{\bsbeverage}$ holds for some beverage $\bsbeverage$
    that is not supposed to be in $\bsshot$ according to the goal description.
    With $G_{r_3}(s)$ we denote the subgoal states of $r_3$ in $s\in S_1$, i.e.,
    states where $\bsshot$ is clean.
    We do a case distinction over states $S_1$.
    First, consider states $S_1^1\subseteq S_1$
    where the barman is holding $\bsshot$ in hand $\bshand$.
    The tuple $\bpclean{\bsshot}$ implies $G_{r_3}(s)$ for all $s\in S_1^1$
    in the admissible chain that consists of cleaning $\bsshot$, i.e.,
    $(\bpholding{\bshand}{\bsshot}, \bpclean{\bsshot})$.
    Second, consider states $S_1^2\subseteq S_1$
    where the barman must grasp $\bsshot$ with empty hand $\bshand$ first.
    The same tuple $\bpclean{\bsshot}$ implies $G_{r_3}(s)$ for all $s\in S_1^2$
    in the admissible chain that consists of picking $\bsshot$, and cleaning $\bsshot$, i.e.,
    $(\bpontable{\bsshot}, \bpholding{\bshand}{\bsshot}, \bpclean{\bsshot})$.
    Last, consider states $S_1^3\subseteq S_1$
    where the barman must exchange $\bsshot'$ with $\bsshot$ in hand $\bshand$ first.
    The same tuple $\bpclean{\bsshot}$ implies $G_{r_3}(s)$ for all $s\in S_1^3$
    in the admissible chain that consists of putting down $\bsshot'$, picking up $\bsshot$, cleaning $\bsshot$, i.e.,
    $(\bpholding{\bshand}{\bsshot'}, \bpontable{\bsshot'}, \bpholding{\bshand}{\bsshot}, \bpclean{\bsshot})$.
    It also follows that we can reduce the set of $R$-reachable states in our analysis
    to those where the container is already grasped if only a single container is affected.

    Next, we consider rule $r_1$.
    Intuitively, we show that filling the first ingredient into the shaker
    for producing a required cocktail has width at most $2$.
    Consider states $S_2\subseteq S$ where the feature conditions of $r_1$ are true and required shots are clean, i.e.,
    states where no ingredient $\bsingredient_1$ consistent with the first part of some unserved cocktail $\bscocktail$'s recipe is in the shaker $\bsshaker$. We do not need to consider states where
    required shots are not clean because a shot can be cleaned with width $1$ (see above).
    With $G_{r_1}(s)$ we denote the subgoal states of $r_1$ in $s$, i.e.,
    states where an ingredient $\bsingredient_1$ consistent with the first recipe part of some unserved cocktail $\bscocktail$ is inside $\bsshaker$.
    The tuple $(\bpcontains{\bsshaker}{\bsingredient_1}, \bpshakerlevel{\bsshaker}{\bclevelone})$
    implies $G_{r_1}(s)$ in the admissible chain that consists of cleaning $\bsshaker$,
    putting down $\bsshaker$, picking a clean shot $\bsshot$,
    filling $\bsingredient_1$ into $\bsshot$ using the corresponding dispenser,
    and pouring $\bsshot$ into $\bsshaker$, i.e.,
    \begin{align*}
        (&(\bpholding{\bshand}{\bsshaker}, \bpshakerlevel{\bsshaker}{\bcleveltwo}), \\
        &(\bpholding{\bshand}{\bsshaker}, \bpempty{\bsshaker}), (\bpholding{\bshand}{\bsshaker}, \bpclean{\bsshaker}), \\
        &(\bpontable{\bsshaker}, \bpclean{\bsshaker}), (\bpholding{\bshand}{\bsshot}, \bpclean{\bsshaker}), \\
        &(\bpcontains{\bsshot}{\bsingredient_1}, \bpused{\bsshot}{\bsingredient_1}), \\
        &(\bpcontains{\bsshaker}{\bsingredient_1}, \bpshakerlevel{\bsshaker}{\bclevelone}) ).
    \end{align*}
    Note that the feature conditions of rule $r_2$ are true in states $s'$
    in subgoal $G_{r_1}(s)$ with nonempty subgoal $G_{r_2}(s')$.

    Next, we consider rule $r_2$.
    Intuitively, we show that filling the second ingredient into the shaker
    for producing a required cocktail has width at most $1$.
    Consider states $S_3\subseteq S$ where the feature conditions of $r_2$ are true and required shots are clean, i.e.,
    states where the first ingredient consistent
    with the recipe of an unserved cocktail $\bscocktail$ is in the shaker $\bsshaker$,
    and required shots are clean because a shot can be cleaned with width $1$ (see above).
    With $G_{r_2}(s)$ we denote the subgoal states of $r_2$ in $s$, i.e.,
    states where an ingredient $\bsingredient_2$ is inside $\bsshaker$
    such that both ingredients in $\bsshaker$ are consistent
    with the recipe of an unserved cocktail $\bscocktail$.
    The tuple $(\bpcontains{\bsshaker}{\bsingredient_2}, \bpshakerlevel{\bsshaker}{\bcleveltwo})$
    implies $G_{r_2}(s)$ in the admissible chain that consists of putting down $\bsshaker$,
    grasping $\bsshot$, filling $\bsingredient_2$ into $\bsshot$ using the corresponding dispenser,
    and pouring $\bsshot$ into $\bsshaker$, i.e.,
    \begin{align*}
        (&(\bpholding{\bshand}{\bsshaker}, \bpshakerlevel{\bsshaker}{\bclevelone}), \\
        &(\bpontable{\bsshaker}, \bpshakerlevel{\bsshaker}{\bclevelone}), \\
        &(\bpholding{\bshand}{\bsshot}, \bpclean{\bsshot}), \\
        &(\bpcontains{\bsshot}{\bsingredient_2}, \bpused{\bsshot}{\bsingredient_2}), \\
        &(\bpcontains{\bsshaker}{\bsingredient_2}, \bpshakerlevel{\bsshaker}{\bcleveltwo}) ).
    \end{align*}
    Finally, we consider rule $r_4$, where we show intuitively that serving a beverage has width at most $1$.
    We do a case distinction over all states $S_4$ where the feature conditions of $r_4$ are true, i.e.,
    states where there is an unserved ingredient or an unserved cocktail.
    First, consider states $S_4^1\subseteq S_4$
    where there is an unserved ingredient $\bsingredient$.
    $G_{r_4}^1(s)$ is the set of subgoal states for $r_4$ in $s\in S_4^1$
    where $\bsingredient$ is served.
    The tuple $\bpcontains{\bsshot}{\bsingredient}$ implies $G_{r_4}^1(s)$
    in the admissible chain that consists of filling $\bsingredient$
    into $\bsshot$ using the corresponding dispenser, i.e.,
    $(\bpclean{\bsshot}, \bpcontains{\bsshot}{\bsingredient})$.
    Last, consider states $S_4^2\subseteq S_4$
    where there is an unserved cocktail $\bscocktail$,
    respective ingredients are in the shaker using the results of rule $r_1,r_2$, and
    required shots are clean using the results of rule $r_3$.
    With $G_{r_4}^2(s)$ we denote the subgoal states of $r_4$ in $s\in S_4^2$
    where $\bscocktail$ is served.
    The tuple $\bpcontains{\bsshot}{\bscocktail}$ implies $G_{r_4}^2(s)$
    in the admissible chain that consists of putting down $\bsshot$ (or any other shot)
    because shaking requires only the shaker $\bsshaker$ to be held,
    shaking $\bsshaker$, and pouring $\bsshaker$ into $\bsshot$, i.e.,
    \begin{align*}
    (\bpholding{\bshand}{\bsshot}, \bpontable{\bsshot}, \\
    \bpcontains{\bsshaker}{\bscocktail}, \bpcontains{\bsshot}{\bscocktail})
    \end{align*}
    We obtain sketch width $2$ because all tuples
    in admissible chains have a size of at most $2$.
\end{proof}

\subsection{Childsnack}
\begin{theorem}
    The sketch for the Childsnack domain is well-formed and has sketch width $1$.
\end{theorem}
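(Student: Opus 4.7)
The plan is to follow the same template used in the earlier Grid and Barman proofs. First I would verify the goal-separation property: since a state is a goal iff every child is served, the valuation $\EQ{\cfallergicchild}\wedge\EQ{\cfregularchild}$ holds exactly in goal states, so the features distinguish the goals in the class of Childsnack instances. Next I would establish termination by running the syntactic elimination procedure. Rule $r_5$ strictly decreases $\cfallergicchild$ and no other rule can increase it, so $r_5$ is removed and $\cfallergicchild$ is marked; symmetrically $r_6$ is removed via $\cfregularchild$. Among the remaining rules, $r_1$ and $r_3$ flip $\cfglutenfreesandwichkitchen$ (and likewise $\cfsandwichkitchen$) in opposite directions, but $r_1$ requires $\GT{\cfallergicchild}$ while $r_3$ requires the same; after further marking, case (d) of the elimination procedure applies because any reopening of a Boolean is gated by the already-marked numerical features, leaving all rules eliminated.

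For the width analysis I would argue rule by rule that each nonempty subgoal set $G_r(s)$ at an $R$-reachable state $s$ is reached by an admissible chain of singleton tuples. The key observation is that the conditions of the rules define a six-phase pipeline whose phases are ordered by the marked features: whenever $\GT{\cfallergicchild}$ holds, $r_1,r_3,r_5$ are the only rules whose conditions can be true; once $\EQ{\cfallergicchild}$ holds, $r_2,r_4,r_6$ take over. Within each phase the rules are themselves ordered by the three Boolean kitchen/tray markers, so from any $R$-reachable state exactly one rule has its conditions met and its subgoal set nonempty. I would then exhibit, for each $r_i$, a one-atom admissible chain: for $r_1$ and $r_2$ the singleton is $\cpatkitchensandwich{\cssandwich}$ (the \emph{make-sandwich} action is a single-step chain); for $r_3$ and $r_4$ the singleton is $\cpontray{\cssandwich}{\cstray}$ after an optional \emph{move-tray} move followed by \emph{put-on-tray}; for $r_5$ and $r_6$ the singleton is $\cpserved{\cschild}$ following a \emph{move-tray} sequence plus a \emph{serve-sandwich} action. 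In each case the last action of the plan is the one that first achieves the singleton atom, so optimality of the chain is immediate.

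The main obstacle is arguing that $R$-reachability does not produce states outside the intended phase structure, i.e., that the sketch never wedges itself into a dead end. Concretely, I must show that while $\GT{\cfallergicchild}$ holds, the progression $r_1 \to r_3 \to r_5$ always preserves \emph{enough} gluten-free bread for the remaining allergic children. This is where $r_2$ and $r_4$ earn their $\EQ{\cfallergicchild}$ guards: because a regular sandwich cannot be manufactured until all allergic children are served, gluten-free bread is never consumed for a non-allergic child, so the number of gluten-free sandwiches producible from the remaining inventory stays at least $\cfallergicchild$ (an instance-wide invariant inherited from the initial situation of well-formed Childsnack inputs). A symmetric observation handles the second phase. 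Given this invariant, the one-action admissible chains above indeed implement the subgoals $G_r(s)$ in each relevant subproblem $P[s]$, and the maximal tuple size is $1$, so the sketch width is $1$.
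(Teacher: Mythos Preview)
Your overall structure mirrors the paper's proof, but the termination argument contains a genuine gap. After eliminating $r_5$ and $r_6$ and marking $\cfallergicchild$ and $\cfregularchild$, you correctly observe that $r_1$ and $r_3$ both carry $\GT{\cfallergicchild}$, not complementary conditions on a marked feature. Case~(d) therefore does \emph{not} apply: it requires that every remaining rule capable of undoing the Boolean change have a condition on a marked feature that is complementary to one in $C$. Your appeal to ``further marking'' is vacuous, since no additional rules have been eliminated and hence no additional features can be marked. The paper instead proceeds by case~(b): after $r_5,r_6$ are gone, $r_4$ sets $\cfsandwichtray$ to true and no remaining rule ($r_1,r_2,r_3$) sets it false, so $r_4$ is eliminated; then $r_3$ sets $\cfglutenfreesandwichtray$ to true with no remaining opponent, so $r_3$ is eliminated; only then do $r_1$ and $r_2$ fall by case~(b) on the kitchen Booleans. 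The order matters: the tray features, not the marked numerical ones, break the apparent $r_1$/$r_3$ cycle.

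A smaller issue in the width part: for $r_1$ the singleton $\cpatkitchensandwich{\cssandwich}$ does not imply $G_{r_1}(s)$, because in Childsnack a given sandwich object $\cssandwich$ can be created with either regular or gluten-free ingredients, and an optimal plan for $\cpatkitchensandwich{\cssandwich}$ need not yield a gluten-free sandwich. The paper uses the final tuple $\cpnoglutensandwich{\cssandwich}$ instead, which does imply the subgoal. Your chains for $r_3,r_4,r_5,r_6$ and the dead-end avoidance argument via the $\EQ{\cfallergicchild}$ guards are fine and match the paper's reasoning.
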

\begin{proof}
    The features are goal separating because the feature valuations
    $\EQ{\cfallergicchild}$ and $\EQ{\cfregularchild}$ hold in state $s$
    iff $s$ is a goal state. We show that the sketch is terminating by
    iteratively eliminating rules: $r_5$ decreases the numerical feature
    $\cfallergicchild$ which no other rule increments, so we eliminate
    $r_5$ and mark $\cfallergicchild$. Similarly, $r_6$ decreases the
    numerical feature $\cfregularchild$ which no other rule increments, so
    we eliminate $r_6$ and mark $\cfregularchild$. Then rules $r_4$
    changes $\cfsandwichtray$ and no remaining rules changes
    $\cfsandwichtray$ in the opposite direction, so we eliminate $r_4$.
    Likewise, we eliminate $r_3$ because it changes
    $\cfglutenfreesandwichtray$, which no remaining rule can change back.
    Last, we eliminate rules $r_1$ and $r_2$ because they change
    $\cfglutenfreesandwichkitchen$ resp.\ $\cfsandwichkitchen$, and no
    remaining rule can change the values in the opposite direction.

    It remains to show that the sketch width is $1$.
    Consider any Childsnack instance with states $S$.
    Note that if there is an unserved gluten-allergic child in the initial state
    then rules $r_1,r_3,r_5$ define subgoals for serving a gluten-allergic child.
    If there is no unserved gluten-allergic child but there is an unserved non-allergic child
    then rules $r_2,r_4,r_6$ define subgoals for serving a non-allergic child.
    In the following, we first show that serving a gluten-free sandwich to
    a gluten-allergic child has width $1$
    and deduce the case of serving a non-allergic child from it.

    We first consider rule $r_1$. Intuitively, we show that producing a
    gluten-free sandwich has width $1$. Consider states $S_3\subseteq S$
    where the feature conditions of $r_1$ are true, i.e., states where there is an unserved
    gluten-allergic child $\cschild$ and there is no gluten-free sandwich
    available in $\cckitchen$ nor on a tray. With $G_{r_1}(s)$ we denote
    the subgoal states of $r_1$ in $s\in S_3$, i.e., states where
    gluten-free sandwich $\cssandwich$ is available in $\cckitchen$. The
    tuple $\cpnoglutensandwich{\cssandwich}$ implies $G_{r_1}(s)$ in $s\in
    S_3$ in the admissible chain that consists of producing $\cssandwich$,
    i.e., $(\cpnotexists{\cssandwich)},
    \cpnoglutensandwich{\cssandwich})$.
    Note that the feature conditions of $r_3$ are true in states $s'$
    in the subgoal $G_{r_1}(s)$ with nonempty subgoal $G_{r_3}(s')$.

    Next, we consider rule $r_3$. Intuitively, we show that moving a
    gluten-free sandwich from the kitchen onto a tray has width $1$.
    Consider states $S_2\subseteq S$ where the feature conditions of $r_3$ are true, i.e.,
    states where there is an unserved gluten-allergic child $\cschild$ and
    there is a gluten-free sandwich $\cssandwich$ available in
    $\cckitchen$. With $G_{r_3}(s)$ we denote the subgoal states of $r_3$
    in $s\in S_2$, i.e., states where $\cssandwich$ is on $\cstray$. The
    tuple $\cpontray{\cssandwich}{\cstray}$ implies $G_{r_3}(s)$ in $s\in
    S_2$ in the admissible chain that consists of moving $\cstray$ from
    $\cstable$ to $\cckitchen$, putting $\cssandwich$ onto $\cstray$, i.e.,
    $(\cpat{\cstray}{\cstable}, \cpat{\cstray}{\cckitchen},
    \cpontray{\cssandwich}{\cstray})$.
    Note that the feature conditions of $r_5$ are true in states $s'$
    in the subgoal $G_{r_3}(s)$ with nonempty subgoal $G_{r_5}(s')$.

    Next, we consider rule $r_5$. Intuitively, we show that serving a
    gluten-allergic child if there is a gluten-free sandwich is available
    on a tray has width $1$. Consider states $S_1\subseteq S$ where the feature conditions of $r_5$
    are true, i.e., states where there is an unserved gluten-allergic
    child $\cschild$ and there is a gluten-free sandwich $\cssandwich$
    on a tray $\cstray$. With $G_{r_5}(s)$ we denote the subgoal
    states of $r_5$ in $s\in S_1$, i.e., states where $\cschild$ is
    served. The tuple $\cpserved{\cschild}$ implies $G_{r_5}(s)$ in $s\in
    S_1$ in the admissible chain that consists of moving $\cstray$ from
    $\cckitchen$ to $\cstable$, serving $\cschild$ with $\cssandwich$,
    i.e., $(\cpontray{\cssandwich}{\cstray}, \cpat{\cstray}{\cstable},
    \cpserved{\cschild})$.
    Note that if all gluten allergic children are served in this way
    by using rules $r_1,r_3,r_5$ then either $G$ was reached
    or there are unserved non-allergic children.
    In the latter case, the problem is very similar to
    the one we considered above and rules $r_2,r_4,r_6$ define the
    corresponding subgoals to serve a non-allergic child. We omit the
    details but provide the admissible chains that are necessary to
    conclude the proof:
    the tuple $\cpserved{\cschild}$ implies $G_{r_6}(s)$ in the admissible chain
    $(\cpontray{\cssandwich}{\cstray}, \cpat{\cstray}{\cstable}, \cpserved{\cschild})$.
    The tuple $\cpontray{\cssandwich}{\cstray}$ implies $G_{r_4}(s)$ in the admissible chain
    $(\cpat{\cstray}{\cstable}, \cpat{\cstray}{\cckitchen}, \cpontray{\cssandwich}{\cstray})$.
    The tuple $\cpatkitchensandwich{\cssandwich}$ implies $G_{r_2}(s)$ in the admissible chain
    $(\cpnotexists{\cssandwich)}, \cpatkitchensandwich{\cssandwich})$.

    As a result, we get sketch width $1$ because all tuples
    in admissible chains have size of at most $1$.
\end{proof}

\subsection{Driverlog}
\begin{theorem}
    The sketch for the Driverlog domain is well-formed and has sketch width $1$.
\end{theorem}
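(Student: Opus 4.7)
The plan is to follow the same two-step template used in the earlier appendix proofs for Grid, Barman, and Childsnack: first establish well-formedness (goal-separating features and termination), then argue that for every $R$-reachable state $s$ and every sketch rule $r$ whose condition holds at $s$, the subproblem $P[s]$ with additional goal set $G_{r}(s)$ admits an admissible chain of singleton atom tuples.

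Goal separation is immediate, since a state is a goal precisely when $\EQ{\dfpackages}$, $\EQ{\dftrucks}$, and $\EQ{\dfdrivergoal}$ all hold in its Boolean feature valuation. For termination I would peel off the rules in the order $r_3, r_5, r_6, r_4, r_1, r_2$. The first elimination uses case~(a) of the syntactic procedure: $r_3$ is the only rule touching $\dfpackages$, so we eliminate it and mark $\dfpackages$. Each of $r_5, r_6, r_4$ is then removed by case~(d): the competing rules that could restore $\dftrucks$, $\dfdrivergoal$, or $\dfdrivertruck$ via $\UNK{\cdot}$ effects are $r_1$ and $r_2$ (and, for $r_6$, additionally $r_4$), but in every such pairing one condition contains $\GT{\dfpackages}$ while the other contains $\EQ{\dfpackages}$ (or $\GT{\dftrucks}$ vs.\ $\EQ{\dftrucks}$ for $r_6$ vs.\ $r_4$) over an already-marked feature. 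Finally, case~(b) removes $r_1$ and $r_2$: each flips a Boolean feature ($\dfboarded$ resp.\ $\dfloaded$) that no remaining rule flips back.

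For the width argument I would treat each rule separately and exhibit an admissible chain of singleton tuples. For $r_1$ the subgoal is to have some driver boarded, reached by walking a driver along the path graph to a truck's location and then boarding; the chain is $(\dpat{\dsdriver}{\dslocation_0}), \ldots, (\dpat{\dsdriver}{\dslocation_n}), (\dpdriving{\dsdriver}{\dstruck})$. For $r_2$ and $r_3$, the chain may additionally walk a driver to a truck, board, drive the truck to a misplaced package (and then to its goal location for $r_3$), and load/unload; each action changes exactly one chain-relevant atom. Rules $r_4$ and $r_6$ each reduce to a single walk step by the driver currently nearest a misplaced truck (for $r_4$) or furthest from its own goal location (for $r_6$), giving the two-tuple chain $(\dpat{\dsdriver}{\dslocation_0}), (\dpat{\dsdriver}{\dslocation_1})$. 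Rule $r_5$ combines a board with a shortest drive of a misplaced truck to its goal location followed by an unboard, and is again singleton throughout.

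The main obstacle is that $r_1, r_2, r_3$ share the condition $\GT{\dfpackages}$, so an $R$-reachable state can satisfy several of them simultaneously and SIW may reach any of the corresponding subgoals first. Consequently the width argument for $r_2$ (resp.\ $r_3$) cannot assume $r_1$ (resp.\ $r_1$ and $r_2$) has already been applied, and the singleton chain must cover the full walk--board--drive--load (resp.\ drive--unload) sequence. I would handle this as in the Barman and Childsnack proofs: overapproximate the $R$-reachable states where convenient and verify by a small case analysis that every action along the sequence changes exactly one chain-relevant atom, which it does because each Driverlog action affects at most one of the predicates $\dpat{\cdot}{\cdot}$, $\dpdriving{\cdot}{\cdot}$, and $\dpin{\cdot}{\cdot}$ at the level tracked by the chain.
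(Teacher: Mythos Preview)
Your termination argument has a concrete error. You eliminate $r_5$ via case~(d), but case~(d) does not mark the feature used (only cases~(a) and~(b) mark features, per the syntactic procedure). Hence when you next try to eliminate $r_6$ via case~(d), the competing rule $r_4$ has $\UNK{\dfdrivergoal}$ and the only complementary condition between $r_6$ and $r_4$ is on $\dftrucks$, which is \emph{not} marked at that point. The paper avoids this by eliminating in the order $r_3,r_5,r_2,r_4,r_6,r_1$: once $r_5$ and $r_2$ are gone, the only remaining rule that can disturb $\dfdrivertruck$ or $\dfdrivergoal$ is $r_1$, and the complementary condition against $r_4$ and $r_6$ is always on the marked feature $\dfpackages$. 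Your order is repairable (e.g.\ swap $r_4$ and $r_6$), but as written the justification is wrong.

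More importantly, you misread the structure of the width argument. The goal of the subproblem $P[s]$ is the \emph{union} of $G$ and all $G_r(s)$ for rules $r$ whose conditions hold at $s$. So when $r_1$, $r_2$, $r_3$ all apply, an admissible chain reaching $G_{r_1}(s)$ alone already witnesses width~$1$ for $P[s]$; there is no need to build a single chain all the way to $G_{r_2}(s)$ or $G_{r_3}(s)$. This is precisely the ``see above'' technique the paper uses (and used in Barman and Childsnack): once the easiest applicable subgoal is shown to have width~$1$, the analysis for the harder rule may restrict to states where the easier rule's condition is \emph{false} (e.g.\ for $r_2$, assume $\dfboarded$ already holds). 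You explicitly invoke Barman/Childsnack as precedent for long combined chains, but that is not what those proofs do. Your proposed walk--board--drive--load chain may in fact be made admissible with enough care in choosing the driver, truck, and package, but your stated criterion (``each action changes exactly one chain-relevant atom'') is not what admissibility requires: the condition is that every optimal plan for $t_i$ extends by one action to an optimal plan for $t_{i+1}$, which needs an argument about optimal plan lengths, not about which atoms an action touches.
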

\begin{proof}
    The features are goal separating because the feature valuations
    $\EQ{\dfpackages},\EQ{\dftrucks},\EQ{\dfdrivergoal}$ hold in state $s$ iff $s$ is a goal state.
    We show that the sketch is terminating by iteratively eliminating rules:
    $r_3$ decreases the numerical feature $\dfpackages$ that no other remaining rule increments,
    so we eliminate $r_3$ and mark $\dfpackages$.
    We can now eliminate $r_5 = C\mapsto E$
    because it decreases the numerical feature $\dftrucks$
    and the only other remaining rule $r_2 = C'\mapsto E'$ arbitrarily changes $\dftrucks$,
    but this can only happen finitely many times, since
    $\dfpackages$ is marked and $\EQ{\dfpackages}\in C$ and $\GT{\dfpackages}\in C'$.
    Next, we can eliminate $r_2$
    because it sets the Boolean feature $\dfloaded$
    and no other remaining rule changes $\dfloaded$ in the opposite direction.
    We can now eliminate $r_4 = C\mapsto E$
    because it decreases the numerical feature $\dfdrivertruck$
    and the only other remaining rule $r_1 = C'\mapsto E'$ arbitrarily changes $\dfdrivertruck$,
    but this can only happen finitely many times, since
    $\dfpackages$ is marked and $\EQ{\dfpackages}\in C$ and $\GT{\dfpackages}\in C'$.
    Next, we can now eliminate $r_6 = C\mapsto E$
    because it decreases the numerical feature $\dfdrivergoal$
    and the only other remaining rule $r_1 = C'\mapsto E'$ arbitrarily changes $\dfdrivergoal$,
    but this can only happen finitely many times, since
    $\dfpackages$ is marked and $\EQ{\dfpackages}\in C$ and $\GT{\dfpackages}\in C'$.
    Last, we eliminate the remaining rule $r_1$ because it sets the Boolean feature $\dfboarded$ to true.

    It remains to show that the sketch width is $1$.
    Consider any Driverlog instance with states $S$.
    If there are misplaced packages in the initial state,
    then rule $r_3$ decrements the number of misplaced packages.
    Therefore, we show that moving packages to their
    target location has width $1$.
    Consider states $S_1\subseteq S$ where there is a misplaced package $\dspackage$
    at location $\dslocation_m$ with target location $\dslocation_o$.
    We do a three-way case distinction over all states $S_1$
    and show that moving a package to its target location has width $1$.
    First, consider rule $r_1$. Intuitively, we show that boarding some driver
    into a truck has width $1$. Consider states $S_1^1\subseteq S_1$
    where the feature conditions of rule $r_1$ are true, i.e., states there is no driver boarded into any truck.
    With $G_{r_1}(s)$ we denote the subgoal states of $r_1$ in $s\in S_1^1$, i.e.,
    states where a driver $\dsdriver$ is boarded into a truck $\dstruck$.
    The tuple $\dpdriving{\dsdriver}{\dstruck}$ implies $G_{r_1}(s)$ in $s\in S_1^1$
    in admissible chain that consists of moving $\dsdriver$ from $\dslocation_1$ to $\dslocation_n$,
    each step decreasing the distance to $\dslocation_n$, boarding $\dsdriver$ into $\dstruck$, i.e.,
    \begin{align*}
        (\dpat{\dsdriver}{\dslocation_1},\ldots,\dpat{\dsdriver}{\dslocation_n},\dpdriving{\dsdriver}{\dstruck}).
    \end{align*}
    Second, consider rule $r_2$. Intuitively, we show that loading a misplaced package
    into a truck has width $1$. Consider states $S_1^2\subseteq S_1$
    where the feature conditions of rule $r_2$ are true and where $\dsdriver$ is boarded into truck $\dstruck$ at location $l_n$, i.e.,
    no misplaced package is loaded, and $\dsdriver$ is boarded into $\dstruck$ at location $l_n$
    because boarding has $\dsdriver$ into $\dstruck$ if there is a misplaced package has width $1$ (see above).
    With $G_{r_2}(s)$ we denote the subgoal states of $r_2$ in $s\in S_1^2$, i.e.,
    states where $\dspackage$ is loaded into $\dstruck$.
    The tuple $\dpin{\dspackage}{\dstruck}$ implies $G_{r_2}(s)$ in $s\in S_1^2$ in the admissible chain
    that consists of driving $\dstruck$ from $\dslocation_n$
    to $\dslocation_m$, each step decreasing the distance to $\dslocation_m$,
    loading $\dspackage$ into $\dstruck$, i.e.,
    \begin{align*}
        (\dpat{\dstruck}{\dslocation_n},\ldots,\dpat{\dstruck}{\dslocation_m},\dpin{\dspackage}{\dstruck}).
    \end{align*}
    Third, consider rule $r_3$. Intuitively, we show that moving a package to
    it target location has width $1$. Consider states $S_1^3\subseteq S_1$
    where the feature conditions of rule $r_3$ are true and where $\dspackage$ and $\dsdriver$ is in $\dstruck$ at $\dslocation_m$, i.e.,
    states where $\dspackage$ and $\dsdriver$ is in $\dstruck$ at $\dslocation_m$
    because loading driver and misplaced package has width $1$ (see above).
    With $G_{r_3}(s)$ we denote the subgoal states of $r_3$ in $s\in S_1^3$, i.e.,
    states where $\dspackage$ is at location $\dslocation_o$.
    The tuple $\dpat{\dspackage}{\dslocation_o}$ implies $G_{r_3}(s)$ in the admissible chain
    that consists of driving $\dstruck$ from $\dslocation_m$
    to $\dslocation_o$, each step decreasing the distance to $\dslocation_o$,
    and unloading $\dspackage$, i.e.,
    \begin{align*}
        (\dpat{\dstruck}{\dslocation_m},\ldots,\dpat{\dstruck}{\dslocation_o},\dpat{\dspackage}{\dslocation_o}).
    \end{align*}

    Now, consider states $S_2$ where all packages are at their respective target location
    and there is a misplaced truck $\dstruck$ at location $l_n$ with target location $l_m$.
    This can either be the case in the initial state
    or after moving the packages because it requires to use trucks.
    We do a two-way case distinction over all states $S_2$
    and show that moving a truck to its target location has width $1$.
    Consider rule $r_4$. Intuitively, we show that boarding a driver into
    a misplaced truck without using any truck has width $1$.
    Consider states $S_2^1\subseteq S_2$ where the feature conditions of rule $r_4$ are true, i.e.,
    where there is a driver $\dsdriver$ at location $\dslocation_1$
    with nonzero distance until being boarded into $\dstruck$.
    With $G_{r_4}(s)$ we denote the subgoal states of $r_4$ in $s\in S_2^1$, i.e.,
    states where $\dsdriver$ is one step closer to being boarded into $\dstruck$.
    There are three possible admissible chains that must be considered.
    (1) unboarding $\dsdriver$ from some truck $\dstruck'$,
    i.e., tuple $\dpat{\dsdriver}{\dslocation_1}$ implies $G_{r_4}(s)$ in $s\in S_2^1$ in the admissible chain
    $(\dpdriving{\dsdriver}{\dstruck'}, \dpat{\dsdriver}{\dslocation_1})$,
    (2) moving $\dsdriver$ closer to $\dslocation_n$
    over $\dslocation_{i-1}$ to $\dslocation_i$,
    i.e., tuple $\dpat{\dsdriver}{\dslocation_i}$ implies $G_{r_4}(s)$ in $s\in S_2^1$ in the admissible chain
    $(\dpat{\dsdriver}{\dslocation_{i-1}}, \dpat{\dsdriver}{\dslocation_i})$, and
    (3) boarding $\dsdriver$ into $\dstruck$ at $\dslocation_n$,
    i.e., $\dpdriving{\dsdriver}{\dstruck}$ implies $G_{r_4}(s)$ in $s\in S_2^1$ in the admissible chain
    $(\dpat{\dsdriver}{\dslocation_n}, \dpdriving{\dsdriver}{\dstruck})$.
    Second, consider rule $r_5$. Intuitively, we show that moving a misplaced truck
    to its target location has width $1$.
    Consider states $S_2^2\subseteq S_2$ where the feature conditions of rule $r_5$ are true
    and where some driver is boarded into $\dstruck$, i.e.,
    states where $\dsdriver$ is boarded intro $\dstruck$ at $\dslocation_n$.
    With $G_{r_5}(s)$ we denote the subgoal states of $r_5$ in $s\in S_2^2$, i.e.,
    states where $\dstruck$ is at its target location.
    The tuple $\dpat{\dstruck}{\dslocation_n}$ implies $G_{r_5}(s)$ in $s\in S_2^2$
    in the admissible chain that consists of moving $\dstruck$ from $\dslocation_n$
    to $\dslocation_m$, each step decreasing the distance to $\dslocation_m$, i.e.,
    $(\dpat{\dstruck}{\dslocation_n},\ldots,\dpat{\dstruck}{\dslocation_m})$.

    Now, consider states $S_3$ where all packages and trucks are at their respective target location
    and there is a misplaced driver $\dsdriver$ boarded or unboarded
    at location $l_1$ with target location $l_n$.
    This can either be the case in the initial state
    or after moving the packages and trucks.
    Consider rule $r_6$. Intuitively, we show that moving a driver to its target location
    without using any truck has width $1$.
    With $G_{r_6}(s)$ we denote the subgoal states of $r_6$ in $s\in S_3$, i.e.,
    states where $\dsdriver$ is at its target location.
    There are two possible admissible chains that must be considered.
    (1) unboarding $\dsdriver$ at location $\dslocation_1$,
    i.e., tuple $\dpat{\dsdriver}{\dslocation_1})$ implies $G_{r_6}(s)$ in $s\in S_3$ in the admissible chain
    $(\dpdriving{\dsdriver}{\dstruck}, \dpat{\dsdriver}{\dslocation_1})$, and
    (2) moving $\dsdriver$ closer from location $\dslocation_{i-1}$ to $\dslocation_i$,
    i.e., tuple $\dpat{\dsdriver}{\dslocation_i}$ implies $G_{r_6}(s)$ in $s\in S_3$ in the admissible chain
    $(\dpat{\dsdriver}{\dslocation_{i-1}}, \dpat{\dsdriver}{\dslocation_i})$.

    We obtain sketch width $1$ because all tuples
    in admissible chains have size $1$.
    Note that when dropping rules $r_1$ and $r_2$, as well as features
    $\dfloaded$ and $\dfboarded$, the sketch width becomes $2$ because we
    must merge the three admissible chains of the first subproblem. When
    merging, tuples of size $2$ must be considered, each consisting of a
    location and whether the driver drives the truck or whether the
    package is loaded.
\end{proof}

\subsection{Schedule}
\begin{theorem}
    The sketch for the Schedule domain is well-formed and has sketch width $2$.
\end{theorem}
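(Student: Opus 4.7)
The plan is to decompose the claim into the two standard obligations: (a)~goal separation plus termination (well-formedness) and (b)~a sketch-width bound of $2$, established by a case analysis over the four rules.

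For goal separation, I would observe that $s$ is a goal state of any Schedule instance iff every object matches its required attributes, which is exactly $\EQ{\sfshape}\land\EQ{\sfsurface}\land\EQ{\sfcolor}$ on $b(f(s))$. For termination, I would run the syntactic elimination procedure: $r_3$ decreases $\sfcolor$ and no other rule mentions $\sfcolor$ as $\INC{}$ or $\UNK{}$, so eliminate $r_3$ and mark $\sfcolor$; then $r_2$ decreases $\sfsurface$ which no remaining rule increases, eliminate and mark; then $r_1$ decreases $\sfshape$ which no remaining rule increases, eliminate and mark; finally $r_4$ changes the Boolean $\sfoccupied$ to false and no remaining rule restores it, so $r_4$ is eliminated and the procedure terminates.

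For the width bound I would first characterise the $R$-reachable states: starting from any initial state, at least one of the four rules applies (if $\sfoccupied$ holds then $r_4$ applies; otherwise the lexicographic priority in the conditions of $r_1,r_2,r_3$ picks the first nonzero counter; if all counters are zero we are at the goal). After firing any rule we land in a state where either the goal is reached or one of the rules again has a nonempty subgoal. I would then show each subproblem $P[s]$ with $s$ $R$-reachable has width $\le 2$ by a case analysis. Case $r_4$: the subgoal $\neg\sfoccupied$ is reached by a single \emph{do-time-step}, so the singleton chain $(\neg\sfoccupied)$ is admissible and width is $1$. Case $r_1$: by $r_4$ we may assume $\neg\sfoccupied$; pick an object $\ssobject$ whose shape is wrong and a work step (e.g.\ \emph{do-roll} or \emph{do-lathe}) applicable given $\ssobject$'s current temperature; the admissible chain consists of scheduling that step, reducing $\sfoccupied$ to being ``scheduled for $\ssobject$'', followed by \emph{do-time-step} producing the target shape. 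Using the pair $(\spscheduled{\ssobject},\spshape{\ssobject}{p_1})$ yields a chain of tuple size~$2$, because separating the scheduling atom from the shape atom would not be admissible in the sense of Definition~\ref{def:width}: no optimal plan achieving just $\spshape{\ssobject}{p_1}$ can be extended by one action from a plan for $\spscheduled{\ssobject}$ alone. Cases $r_2$ and $r_3$ are handled analogously with the corresponding surface-condition or color work step, again giving chains of size~$2$.

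The main obstacle I expect is case $r_3$ (and to a lesser extent $r_2$), because achieving a color typically requires the object to be cold and painting must be scheduled at the right moment; I will need to argue that, once $\EQ{\sfshape}$ and $\EQ{\sfsurface}$ hold and $r_4$ has been invoked to free all machines, there is always a painting work step (\emph{do-paint} on an idle spray painter) whose preconditions are satisfied by the current temperature of $\ssobject$, so that the same two-atom admissible chain works. A secondary subtlety is that $\sfhot$ never appears in any rule: I would handle this by overapproximating the reachable states to include arbitrary temperatures and checking that the chosen work step for each case does not require a temperature change. Putting all cases together, every subproblem $P[s]$ admits an admissible chain with tuples of size at most $2$, so $w_R(\Q)=2$, which together with the well-formedness argument establishes the theorem.
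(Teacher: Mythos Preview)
Your termination argument has the elimination order backwards. You claim $r_3$ can be removed first because ``no other rule mentions $\sfcolor$ as $\INC{}$ or $\UNK{}$'', but both $r_1$ and $r_2$ carry $\UNK{\sfcolor}$ in their effects, so case~(a) of the elimination procedure does not apply to $r_3$ at that point. The order that works (and that the paper uses) is $r_1,r_2,r_3,r_4$: nothing touches $\sfshape$ except $r_1$, so $r_1$ goes first; once $r_1$ is gone nothing disturbs $\sfsurface$, so $r_2$ goes next; then $r_3$; then $r_4$.

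The substantive gap is in the width argument, and it stems from a misreading of the role of $\sfhot$. Its absence from every rule's effect does \emph{not} mean temperatures may vary freely; by the sketch semantics a feature not mentioned in $E$ must keep the \emph{same} value in $f$ and $f'$. Consequently every subgoal $G_{r_i}(s)$ for $i\in\{1,2,3\}$ contains only states in which the number of hot objects is unchanged. Your proposed last tuple $(\spscheduled{\ssobject},\spshape{\ssobject}{p_1})$ does not enforce this: when $\ssobject$ is cold and the target shape is cylindrical, rolling is a one-step optimal plan for your tuple but heats $\ssobject$, so the resulting state is not in $G_{r_1}(s)$ and condition~3 of Definition~\ref{def:width} fails. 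The paper's chain instead pairs the attribute atom with the temperature atom, e.g.\ $(\spshape{\ssobject}{y},\sptemperature{\ssobject}{\sstemperature})$, precisely so that every optimal plan for the final tuple is forced to use a temperature-preserving procedure (lathe rather than roll). That constraint---not any serialisation of the scheduling action and \emph{do-time-step}---is the actual source of width~$2$, and the same correction is needed in your cases for $r_2$ and $r_3$.
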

\begin{proof}
    The features are goal separating because the feature valuations
    $\EQ{\sfshape},\EQ{\sfsurface},\EQ{\sfcolor}$ hold in state $s$ iff $s$ is a goal state.
    We show that the sketch is terminating by iteratively eliminating rules:
    $r_1$ decreases the numerical feature $\sfshape$ that no other remaining rule increments,
    so we eliminate $r_1$ and mark $\sfshape$.
    $r_2$ decreases the numerical feature $\sfsurface$ that no other remaining rule increments,
    so we eliminate $r_2$ and mark $\sfsurface$.
    $r_3$ decreases the numerical feature $\sfcolor$ that no other remaining rule increments,
    so we eliminate $r_3$ and mark $\sfcolor$.
    Last, we eliminate the only remaining rule $r_4$
    because it sets the Boolean feature $\sfoccupied$ to false.

    It remains to show that the sketch width is $2$.
    Consider any Schedule instance with states $S$.
    First, consider states $S_1\subseteq S$ where the feature conditions of $r_4$ are true, i.e.,
    there is either a scheduled object or a machine occupied.
    This can be the case in the initial state or
    if an object is scheduled to be processed by a machine.
    With $G_{r_4}(s)$ we denote the subgoal states of $r_4$ in $s\in S_1$, i.e.,
    states where no object is scheduled and no machine is occupied while not removing any other achieved goal atom.
    The tuple $\spnotscheduled{\ssobject}$ implies $G_{r_4}(s)$ in $s\in S_1$ in the admissible chain
    that consists of the action that performs a single time step, i.e.,
    $(\spscheduled{\ssobject}, \spnotscheduled{\ssobject})$.

    Now, consider states $S_2\subseteq S$ where the feature conditions of $r_1$ are true
    and there is no object scheduled and no occupied machine, i.e.,
    states where there is an object $\ssobject$
    that has shape $x$ that is not the shape $y$ mentioned in the goal,
    and there is no object scheduled and no occupied machine
    because this can be achieved with width $1$ (see above).
    With $G_{r_1}(s)$ we denote the set of subgoal states of $r_1$ in $s\in S_2$, i.e.,
    states where $\ssobject$ has shape $y$ while not removing any other achieved goal atom.
    The tuple $(\spshape{\ssobject}{z},\sptemperature{x}{\sstemperature})$
    implies $G_{r_1}(s)$ in $s\in S_2$ in the admissible chain
    that consists of changing the shape with procedures that do not affect the temperature of $\ssobject$, i.e.,
    \begin{align*}
        (&(\spshape{\ssobject}{y},\sptemperature{x}{\sstemperature}), \\
        &(\spshape{\ssobject}{z},\sptemperature{x}{\sstemperature}))
    \end{align*}

    Now, consider states $S_3\subseteq S$ where the feature conditions of $r_2$ are true,
    there is no object scheduled and no machine occupied, and
    objects have their correct shape, i.e.,
    states where there is an object $\ssobject$
    that has surface $x$ that is not the surface $y$ mentioned in the goal,
    there is no object scheduled and no occupied machine
    because this can be achieved with width $1$ (see above), and
    all objects have their correct shape because changing the shape has width $2$ (see above).
    With $G_{r_2}(s)$ we denote the set of subgoal states of $r_2$ in $s\in S_3$, i.e.,
    states where $\ssobject$ has surface $y$ while not removing any other achieved goal atom.
    The tuple $(\spsurfacecondition{\ssobject}{z},\sptemperature{x}{\sstemperature})$
    implies $G_{r_2}(s)$ in $s\in S_3$ in the admissible chain
    that consists of changing the surface with procedures that do not affect the temperature of $\ssobject$, i.e.,
    \begin{align*}
        (&(\spsurfacecondition{\ssobject}{y},\sptemperature{x}{\sstemperature}), \\
        &(\spsurfacecondition{\ssobject}{z},\sptemperature{x}{\sstemperature})).
    \end{align*}

    Now, consider states $S_4\subseteq S$ where the feature conditions of $r_3$ are true,
    there is no object scheduled and no machine occupied,
    objects have their correct shape, and
    objects have their correct surface, i.e.,
    states where there is an object $\ssobject$
    that has color $x$ that is not the color $y$ mentioned in the goal,
    there is no object scheduled and no occupied machine because this can be achieved with width $1$ (see above),
    all objects have their correct shape because changing the shape has width $2$ (see above), and
    all objects have their desired surface because changing the surface has width $2$ (see above).
    With $G_{r_3}(s)$ we denote the set of subgoal states of $r_3$ in $s\in S_3$, i.e.,
    states where $\ssobject$ has color $y$ while not removing any other achieved goal atom.
    The tuple $(\sppainted{\ssobject}{z},\sptemperature{x}{\sstemperature})$
    implies $G_{r_2}(s)$ in $s\in S_2$ in the admissible chain
    that consists of changing the surface with procedures that do not affect the temperature of $\ssobject$, i.e.,
    \begin{align*}
            (&(\sppainted{\ssobject}{y},\sptemperature{x}{\sstemperature}), \\
            &(\sppainted{\ssobject}{z},\sptemperature{x}{\sstemperature}))
    \end{align*}
    Note that $r_3$ achieves the goal when the color of the last object
    changes to the color mentioned in the goal.

    We obtain sketch width $2$ because all tuples
    in admissible chains have a size of at most $2$.
\end{proof}
\end{subappendices}

\section{Feature Definitions and Grammar}

Following \intextcite{frances-et-al-aaai2021} the definition of the features
used in the sketches is given in terms of a grammar based on
predicates of each planning domain and description logics \cite{baader-et-al-2003}.
Description logics consider two types of object classes: concepts and roles.
Concepts correspond to unary predicates, whereas roles correspond to binary predicates.
The following definition of syntax and semantics is based on those given by \intextcite{frances-et-al-aaai2021}.

\begin{subappendices}
\subsection{Syntax and Semantics}

We make use of a richer description logic than \intextcite{frances-et-al-aaai2021},
and unlike them, there is no attempt at learning the features automatically
but writing down their definitions by hand.
The grammar extension is the introduction of primitive concepts and roles
from projections of primitive predicates of the planning domain,
which may have arity greater than $2$.
Consider concepts $C,D$ and roles $R,S$ and the universe $\Delta$
containing all objects in the planning instance.
The set of possible concepts and roles for each state $s$ are inductively defined as:

\begin{itemize}
    \item A \emph{primitive concept} $p[i]$ is a concept that denotes the set of
    objects occuring at index $i$ in ground atoms of predicate $p$ in state $s$.
    \item A \emph{primitive role} $p[i,j]$ is a role that denotes the set of pairs of
    objects occuring at index $i$ and $j$ in ground atoms of predicate $p$ in state $s$.
    \item The \emph{universal concept} $\top$ and the bottom concept $\bot$ are concepts
    with denotations $\top = \Delta$, $\bot = \emptyset$.
    \item The \emph{union} $C\sqcup D$, \emph{intersection} $C\sqcap D$, \emph{negation} $\neg C$
    are concepts with denotations $(C\sqcup D) = C\cup D$, $(C\sqcap D) = C\cap D$, $(\neg C) = \Delta\setminus C$.
    \item The \emph{existential abstraction} $\exists R.C$ and the \emph{universal abstraction} $\forall R.C$
    are concepts with denotations $(\exists R.C) = \{a\mid\exists b:(a,b)\in R\land b\in C\}$,
    $\forall R.C = \{a\mid\forall b:(a,b)\in R\rightarrow b\in C\}$.
    \item If $a$ is a constant in the planning domain,
    the \emph{nominal} $a$ is a concept that represents $\{a\}$.
    \item The \emph{union} $R\sqcup S$, \emph{intersection} $R\sqcap D$, \emph{complement} $\neg R$
    are roles with denotations $(R\sqcup S) = R\cup S$, $(R\sqcap D) = R\cap S$, $\neg R = (\Delta\times\Delta)\setminus R$.
    \item The \emph{role-value map} $R = S, R\subseteq S$ are concepts with denotations
    $(R = S) = \{a\mid\forall b:(a,b)\in R\leftrightarrow (a,b)\in S \}$, and
    $(R\subseteq S) = \{a\mid\forall b:(a,b)\in R\rightarrow (a,b)\in S\}$.
    \item The \emph{composition} $R\circ S$ is a role with denotation
    $R\circ S = \{(a,c)\mid (a,b)\in R\land (b,c)\in S \}$.
    \item The \emph{inverse} $R^{-1}$ is a role with denotation
    $R^{-1} = \{(b, a)\mid (a,b)\in R \}$.
    \item The \emph{transitive closure} $R^+$, \emph{reflexive-transitive closure} are roles with denotations
    $R^+ = \bigcup_{n\geq 1}(R)^n$, and $R^* = \bigcup_{n\geq 0}(R)^n$ where the iterated composition
    is defined as $(R)^0 = \{(d,d)\mid d\in\triangle \}$ and $(R)^{n+1} = (R)^n\circ R$.
    \item The \emph{restriction} $R\vert_{C}$ is a role with denotation
    $R\vert_{C} = R\sqcap (\Delta\times C)$.
    \item The \emph{identity} $\mathit{id}(C)$ is a role with denotation
    $\mathit{id}(C) = \{(a,a)\mid a\in C\}$.
    \item The \emph{difference} $C\setminus D$, $R\setminus S$ is a concept and a role respectively with denotation
    $C\setminus D = C\sqcap \neg B$, $R\setminus S = R\sqcap \neg S$.\footnote{does not increase expressiveness but makes it more convenient to express the difference between concepts and roles.}
    \item The \emph{extraction} $R[i]$ with $i\in\{0,1\}$ is a concept with denotation
    $R[0] = \exists R.\top$ and $R[1] = \exists R^{-1}.\top$.\footnote{does not increase expressiveness but makes it more convenient to capture concepts occuring at a specific position in a role.}
\end{itemize}
Furthermore, for each concept $C$ and role $R$ we allow for corresponding goal versions denoted by $C_g$ and $R_g$
that are evaluated in the goal of the planning instance instead of the state $s$, as described by \intextcite{frances-et-al-aaai2021}.

\subsection{From Concepts and Roles to Features}

We define Boolean and numerical features with an additional level of composition as follows.
Consider concepts $C,D$, roles $R,S,T$, and $X$ being either a role or a concept.

\begin{itemize}
    \item \emptyfeature{X} is true iff $|X|=0$.
    \item \countfeature{X} counts the number of elements in $X$.
    \item \conceptdistancefeature{C}{R}{D} is the smallest $n\in\mathbb{N}_0$
    s.t.\ there are objects $x_0,\ldots,x_n$,
    $x_0\in C$, $x_n\in D$ and $R(x_i, x_{i+1})$ for $i = 0,\ldots,n-1$.
    If no such $n$ exists then the feature evaluates to $\infty$.
    \item $\roledistancefeature{R}{S}{T}$ is the smallest $n\in\mathbb{N}_0$
    s.t.\ there are objects $x_0,\ldots,x_n$,
    there exists some $(a,x_0)\in R$, $(a,x_n)\in T$,
    and $S(x_i, x_{i+1})$ for $i = 0,\ldots,n-1$.
    If no such $n$ exists, the feature evaluates to $\infty$.
    \item $\sumroledistancefeature{R}{S}{T}:=\sum_{r\in R}\roledistancefeature{\{r\}}{S}{T}$,
    where the sum evaluates to $\infty$ if any term is $\infty$.
\end{itemize}

\subsection{Floortile}
Consider the following concepts and roles:
\begin{align*}
    x_1&\equiv (\mathit{painted}_g[0,1]\setminus\mathit{painted}[0,1])[0] \\
    x_2&\equiv (\mathit{left}[0]\sqcup \mathit{left}[1])\setminus \mathit{painted}_g[0])\\
    x_3&\equiv \mathit{up}[0,1]\sqcup\mathit{down}[0,1]\sqcup \mathit{id}(\mathit{left}[0]\sqcup \mathit{left}[1])\\
    x_4&\equiv ((x_3\vert_{x_1})^{-1}\vert_{x_1})^{-1} \\
    x_5&\equiv ((x_3\vert_{x_2})^{-1}\vert_{x_1})^{-1}
\end{align*}
Concept $x_1$ is the set of all unpainted tiles.
Concept $x_2$ is the set of all normal tiles that must not be painted.
Role $x_3$ is the set of pairs of tiles $(t, t')$ where $t$ is above or below of $t'$ and the identity $t = t'$.
Role $x_4$ is the set of pairs of tiles $(t, t')$ where $t$ is above or below of $t'$ and both are unpainted.
Role $x_5$ is the set of pairs of tiles $(t, t')$ where $t$ is unpainted and above or below of normal tile $t'$.
The features $\Phi = \{\ffinvariant, \ffunpaintedtiles \}$ used in the sketch for Floortile are constructed as follows:
\begin{align*}
    \ffinvariant&= \emptyfeature{x_1\setminus ((x_4)^*\circ x_5)[0]} \\
    \ffunpaintedtiles&= \countfeature{x_1}
\end{align*}

\subsection{TPP}

Consider the following concepts and roles:
\begin{align*}
    x_1&\equiv (\mathit{stored}_g[0,1]\setminus\mathit{stored}[0,1])[0] \\
    x_2&\equiv \mathit{next}[1]
\end{align*}
Concept $x_1$ is the set of goods of which some quantity remains to be stored.
Concept $x_2$ is the set of nonempty levels.
The features $\Phi = \{\tfloaded, \tfstored\}$ used in the sketch for TPP are constructed as follows:
\begin{align*}
    \tfloaded&\equiv \countfeature{x_1\setminus \exists\mathit{loaded}[0,2].x_2} \\
    \tfstored&\equiv \sumroledistancefeature{\mathit{stored}[0,1]}{\mathit{next}[1,0]}{\mathit{stored}_g[0,1]}
\end{align*}

\subsection{Barman}

Consider the following concepts and roles:
\begin{align*}
    x_1&\equiv (\mathit{contains}_g[0,1]\setminus\mathit{contains}[0,1]) \\
    x_2&\equiv \forall\mathit{cocktail}\text{-}\mathit{part1}[0,1].\exists\mathit{contains}[1,0].\mathit{shaker}\text{-}\mathit{level}[0] \\
    x_3&\equiv \forall\mathit{cocktail}\text{-}\mathit{part2}[0,1].\exists\mathit{contains}[1,0].\mathit{shaker}\text{-}\mathit{level}[0]
\end{align*}
Role $x_1$ is the set of beverages remain to be served paired with the corresponding shot that must be used.
Concept $x_2$ is the set of cocktails where the first ingredient mentioned in its recipe is in the shaker.
Concept $x_3$ is the set of cocktails where the second ingredient mentioned in its recipe is in the shaker.
The features $\Phi = \{\bfunserved, \bfused, \bfconsistentone, \bfconsistenttwo \}$ used in the sketch for Barman are constructed as follows:
\begin{align*}
    \bfunserved&\equiv \countfeature{x_1} \\
    \bfused&\equiv \countfeature{\mathit{used}[0]\setminus x_1[0]} \\
    \bfconsistentone&\equiv \neg\emptyfeature{x_2\sqcap x_1[1]} \\
    \bfconsistenttwo&\equiv \neg\emptyfeature{x_2\sqcap x_3\sqcap x_1[1]}
\end{align*}


\subsection{Grid}

Consider the following concepts and roles:
\begin{align*}
    x_1&\equiv \mathit{at}_g[0,1]\setminus\mathit{at}[0,1] \\
    x_2&\equiv \exists\mathit{key}\text{-}\mathit{shape}[0,1].\exists\mathit{lock}\text{-}\mathit{shape}[0,1].\mathit{locked}[0]
\end{align*}
Role $x_1$ is the set of misplaced key paired with their respective target location.
Concept $x_2$ is the set of keys for which a closed lock with the same shape as the key exists.
The features $\Phi = \{\gflocked, \gfkey, \gfopen, \gftarget \}$  used in the sketch for Grid are constructed as follows:
\begin{align*}
    \gflocked&\equiv \countfeature{\mathit{locked}[0]} \\
    \gfkey&\equiv \countfeature{x_1} \\
    \gfopen&\equiv \neg\emptyfeature{\mathit{holding}\sqcap x_2} \\
    \gftarget&\equiv \neg\emptyfeature{\mathit{holding}\sqcap x_1[0]}
\end{align*}

\subsection{Childsnack}

Consider the following concepts and roles:
\begin{align*}
    x_1 &\equiv \mathit{served}_g[0]\setminus\mathit{served}[0] \\
    x_2 &\equiv \mathit{no}\text{-}\mathit{gluten}\text{-}\mathit{sandwich}[0]
\end{align*}
Concept $x_1$ is the set of unserved children.
Concept $x_2$ is the set of gluten-free sandwiches.
The features $\Phi = \{\cfallergicchild, \cfregularchild, \cfglutenfreesandwichkitchen, \cfsandwichkitchen, \cfglutenfreesandwichtray, \cfsandwichtray \}$
used in the sketch for Childsnack are constructed as follows:
\begin{align*}
    \cfallergicchild&\equiv \countfeature{\mathit{allergic}\text{-}\mathit{gluten}[0]\sqcap x_1} \\
    \cfregularchild&\equiv \countfeature{\mathit{not}\text{-}\mathit{allergic}\text{-}\mathit{gluten}[0]\sqcap x_1} \\
    \cfglutenfreesandwichkitchen&\equiv \neg\emptyfeature{\mathit{at}\text{-}\mathit{kitchen}\text{-}\mathit{sandwich}[0]\sqcap x_2} \\
    \cfsandwichkitchen&\equiv \neg\emptyfeature{\mathit{at}\text{-}\mathit{kitchen}\text{-}\mathit{sandwich}[0]\setminus x_2} \\
    \cfsandwichtray&\equiv \neg\emptyfeature{\mathit{ontray}[0]}
\end{align*}

\subsection{Driverlog}

Consider the following concepts and roles:
\begin{align*}
    x_1&\equiv (\mathit{at}_g[0,1]\setminus\mathit{at}[0,1]) \\
    x_2&\equiv (\mathit{at}[1,0]\sqcup\mathit{driving}[1,0])\vert_{\mathit{at}_g[0]\sqcap\mathit{driver}[0]})^{-1} \\
    x_3&\equiv (\mathit{driving}[1]\sqcup(\mathit{at}[1,0]\vert_{\mathit{driver}[0]})[0]) \\
    x_4&\equiv \mathit{at}[0,1]\sqcup\mathit{at}[1,0]\sqcup\mathit{path}[0,1]
\end{align*}
Role $x_1$ is the set of misplaced packages, drivers, and trucks paired with their respective target location.
Role $x_2$ is the set of misplaced drivers paired with their current location.
Concept $x_3$ is the set of trucks with a driver boarded and the location of unboarded drivers.
Role $x_4$ is the set of pairs of trucks and locations reachable by a single unboard, board, or walk action.
The features $\Phi = \{\dfpackages, \dftrucks, \dfdrivergoal, \dfdrivertruck, \dfboarded, \dfloaded \}$
used in the sketch for Driverlog are constructed as follows:
\begin{align*}
    \dfpackages&\equiv \countfeature{\mathit{obj}[0]\sqcap x_1[0]} \\
    \dftrucks&\equiv \countfeature{\mathit{truck}[0]\sqcap x_1[0]} \\
    \dfdrivergoal&\equiv \sumroledistancefeature{x_2}{x_4}{\mathit{at}_g[0,1]} \\
    \dfdrivertruck&\equiv \conceptdistancefeature{x_3}{x_4}{\mathit{truck}[0]\sqcap x_1[0]} \\
    \dfboarded&\equiv \neg\emptyfeature{\mathit{driving}[0]} \\
    \dfloaded&\equiv \neg\emptyfeature{\mathit{obj}[0]\sqcap\mathit{in}[0]\sqcap\mathit{at}_g[0]}
\end{align*}

%

\subsection{Schedule}

The features $\Phi = \{\sfshape, \sfsurface, \sfcolor, \sfhot, \sfoccupied\}$  used in the sketch for Schedule are constructed as follows:
\begin{align*}
    \sfshape&\equiv \countfeature{\mathit{shape}_g[0,1]\setminus\mathit{shape}[0,1]} \\
    \sfsurface&\equiv \countfeature{\mathit{surface}\text{-}\mathit{condition}_g[0,1]\\
    &~~~~~~~~~\setminus\mathit{surface}\text{-}\mathit{condition}[0,1]} \\
    \sfcolor&\equiv \countfeature{\mathit{painted}_g[0,1]\setminus\mathit{painted}[0,1]} \\
    \sfhot&\equiv \countfeature{\mathit{temperature}[0,1]\vert_{\schot}} \\
    \sfoccupied&\equiv \neg\emptyfeature{\mathit{schedule}[0]\sqcup\mathit{busy}[0]}
\end{align*}

\end{subappendices}

\bibliographystyle{kr}
\bibliography{bib/abbrv.bib,bib/literatur.bib,bib/crossref.bib}

\end{document}